\documentclass{article}

\PassOptionsToPackage{numbers, compress}{natbib}

\usepackage[preprint]{neurips_2023}




\usepackage[utf8]{inputenc} 
\usepackage[T1]{fontenc}    
\usepackage{hyperref}       
\usepackage{url}            
\usepackage{booktabs}       
\usepackage{amsfonts}       
\usepackage{nicefrac}       
\usepackage{microtype}      
\usepackage{xcolor}         

\usepackage{amsmath}
\usepackage{amssymb}
\usepackage{mathtools}
\usepackage{amsthm}
\usepackage{caption}

\usepackage[capitalize,noabbrev]{cleveref}
\usepackage{multirow}
\usepackage{enumitem}

\setlist[itemize]{label=$\triangleright$,leftmargin=6mm,itemsep=0pt,topsep=0pt}

\theoremstyle{plain}
\newtheorem{theorem}{Theorem}[section]

\newtheorem{lemma}[theorem]{Lemma}

\theoremstyle{definition}
\newtheorem{definition}[theorem]{Definition}

\theoremstyle{remark}

\usepackage[textsize=tiny]{todonotes}
\usepackage{algorithm}
\usepackage{algorithmic}

\newcommand{\poly}{\ensuremath{\mathsf{poly}}}
\newcommand{\E}{\mathbb{E}}

\newcommand{\ALG}{\textsf{ALG}}

\newcommand{\diam}{\mathsf{diam}}

\newcommand{\A}{\mathcal{A}}

\newcommand{\R}{\mathcal{R}}
\newcommand{\X}{\mathcal{X}}
\newcommand{\U}{\mathcal{U}}

\newcommand{\calS}{\mathcal{S}}

\newcommand{\risk}{\mathsf{risk}}
\newcommand{\gen}{\mathsf{gen}}
\newcommand{\opt}{\mathsf{opt}}
\newcommand{\apprx}{\mathsf{approx}}
\newcommand{\rSGD}{\mathsf{rSGD}}
\newcommand{\NSGD}{\mathsf{NSGD}}
\newcommand{\NSSGD}{\mathsf{NSSGD}}

\newcommand{\og}{\overline{g}}
\newcommand{\tg}{\Tilde{g}}

\newcommand{\epsl}{\varepsilon}

\newcommand{\Var}{\mathrm{Var}}

\title{Learning across Data Owners with Joint Differential Privacy}

%

\author{%
  Yangsibo Huang\\
  Princeton University \\
  \texttt{yangsibo@princeton.edu} \\
  \And
  Haotian Jiang\\
  Microsoft Research\\ \texttt{jhtdavid@cs.washington.edu}\\
  \And
  Daogao Liu\\
  University of Washington\\
  \texttt{dgliu@uw.edu}\\
  \And 
  Mohammad Mahdian\\
  Google Research\\
  \texttt{mahdian@google.com}\\
  \And
  Jieming Mao\\
  Google Research\\
  \texttt{maojm@google.com}\\
  \And 
  Vahab Mirrokni \\
  Google Research\\
  \texttt{mirrokni@google.com}
}

\graphicspath{{./figures/}}

\begin{document}

\maketitle


\begin{abstract}
 In this paper, we study the setting in which data owners train machine learning models collaboratively under a privacy notion called joint differential privacy \cite{kpru14}. In this setting, the model trained for each data owner $j$ uses $j$'s data without privacy consideration and other owners' data with differential privacy guarantees. This setting was initiated in \cite{jrs+21} with a focus on linear regressions. In this paper, we study this setting for stochastic convex optimization (SCO). We present an algorithm that is a variant of DP-SGD \cite{SCS13,ACG+16} and provides theoretical bounds on its population loss. We compare our algorithm to several baselines and discuss for what parameter setups our algorithm is more preferred. We also empirically study joint differential privacy in the multi-class classification problem over two public datasets. Our empirical findings are well-connected to the insights from our theoretical results.
\end{abstract}

\section{Introduction}
In recent years, there are growing attention to ensuring privacy when user data owners share data for jointly training machine learning models. These owners could be single users who have their personalized models trained on their own devices, or online platforms, who are prevented from sharing data with other platforms due to user requirements or data privacy laws and regulations (e.g. the Digital Markets Act (Article 5(2)) \citep{dma} aims to regulate how each Big Tech firm shares data collected from different service platforms). 

In many of these scenarios, the model trained for each data owner is only used by itself. And therefore, it is only necessary to ensure that its data are used under privacy guarantees when training models for other owners, but not the model for itself.

In differential privacy research, this is precisely captured by a notion called "joint differential privacy" \cite{kpru14}, which was initially developed for applications in implementing approximate equilibrium in game theory. Informally, joint differential privacy requires that when one data owner switches to reporting a neighboring dataset, the output distribution of other data owners' outcomes, machine learning models in this paper's context, should not change by much.

Machine learning with joint differential privacy was initiated in \cite{jrs+21}, focusing on linear regressions. Their main result is an alternating minimization algorithm that alternates between users updating individual predictor components non-privately and users jointly updating the common embedding component with differential privacy. In this work, we consider stochastic convex optimization (SCO), the basis of many machine learning tasks, under joint differential privacy (joint-DP). 

\subsection{Our Contributions}

\paragraph{Problem setting.} 
Consider a set of $n$ owners, and each owner $j \in [n]$ contains $r$ users who jointly hold a dataset of $m$ records $\mathcal{S}_j = \{z_{ij}\}_{i \in [m]}$ where $z_{ij} \in \mathcal{Z} \subseteq \mathbb{R}^d$ are drawn i.i.d. from a distribution $\mathcal{P}_j$ over the domain $\mathcal{Z}$. 
For concreteness, let us assume that each user of the same owner holds the same number $m/r$ of data points in the dataset $\mathcal{S}_j$. 
Let $\mathcal{S} = \{\mathcal{S}_j\}_{j \in [n]}$ denote the collection of all owners' data. 

To characterize the personalized model for each owner and shared information across different owners, we assume the set of parameters is divided into two parts: each owner $j$'s personalized parameters $x_j$, and common parameters $u$ that are shared by all owners in the system. 
We let $\X \subset \mathbb{R}^k$ be the domain for each owner's parameters $x_j$, and $\U \subset \mathbb{R}^\ell$ the domain of the parameters $u$ shared by all owners. Denote $D_{\X} = \diam(\X)$ and $D_{\U} = \diam(\U)$ the diameters of these domains.

Given an $L$-Lipschitz loss function $h(x,u,z): \X \times \U \times \mathcal{Z} \rightarrow \mathbb{R}$ that is convex in $(x,u)$ for any $z \in \mathcal{Z}$. 
We define the excess population loss function of each owner $j$ as
\begin{align*}
f_j(x_j, u) &:= \E_{z_j \sim \mathcal{P}_j}[h(x_j, u, z_j)] 
\end{align*}
and the excess population loss function of all owners as 
\begin{align}
\label{eq:objective_function}
    f(x,u) = \E_{j \sim [n]} [f_j(x_j, u)],
\end{align}
where $x=[x_1,x_2,\cdots,x_n]\in\mathbb{R}^{nk}$ is the concatenation of the $n$ personalized parameters. Intuitively, the excess population loss function $f$ is the average excess population loss function of all the owners. 

We aim to learn the personalized and common parameters $\{x_j^*\}_{j\in[n]}, u^*$ to minimize the excess population  function $f$ while satisfying joint differential privacy across different owners.
Intuitively, the parameter $x_j^* \in \X$ corresponds to each owner $j$'s best personalization, and the parameter $u^* \in \U$ corresponds to the commonalities among different owners. 
To capture the shared information among the users, we assume that the minimizer $(x_j^*,u^*)$ to each owner $j$'s population loss $f_j(x_j, u)$ shares the same $u^*$ for any $j \in [n]$. 

In practical applications, it is often  instructive to consider the case where $\ell \gg k$, i.e., the amount of shared information is much larger than the amount of personalized information, but we do not make this assumption and the statement of our results hold for all regimes of $\ell$ and $k$.

\paragraph{User-Level Joint Differential Privacy.} To satisfy the joint-DP guarantees, we require that each owner $j$'s data is protected from any other owner $j'$ in the learning of the personalized and shared parameters, while owner $j$ can learn and compute non-privately using her own data. 
In particular, this means that different users of the {\em same} owner can learn and compute non-privately using each others' data, but user's data are protected from any user belonging to a {\em different} owner. 

As in \cite{jrs+21}, we operate in the billboard model where apart from the $n$ owners introduced above, there is additionally a computing server. The server runs a differentially private algorithm on sensitive information from the owners, and broadcasts the output to all the owners. Each owner $j \in [n]$ can then use the broadcasted output in a computation that solely relies on her data. The output of this computation is not made available to other owner. The billboard model is that it trivially satisfies joint differential privacy.

We consider the standard setting of user-level joint-DP. This means that the replacement or not of a single user's data in an owner's dataset should be indistinguishable to every other owner in the system (see \Cref{defn:DP} for a more formal definition of indistinguishability and differential privacy). 
%



\paragraph{Our Theoretical Result.} Our main result is an algorithm for user-level joint-DP with optimal excess population loss. In particular, we prove the following theorem. 


\begin{theorem}[User-level joint-DP, Informal] \label{thm:main_informal}
For any $0 < \delta < 1/\poly(n)$ and $10\ge \epsl > 0$, there is an algorithm for the problem above achieving  user-level $(\epsl,\delta)$ joint-DP  with excess population loss
\[
O\Big(RL \cdot \Big(\frac{1}{\sqrt{mn}} + \frac{\sqrt{\ell \log(m/\delta)}}{\epsl n r} \Big) \Big) ,
\]
where $R=\sqrt{n D_{\X}^2 + D_{\U}^2}$.
\end{theorem}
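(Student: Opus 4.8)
The plan is to realize the billboard protocol as a single run of noisy projected stochastic gradient descent on the \emph{joint} objective $f$ over the product domain $\mathcal{K}:=\X^n\times\U\subset\Real^{nk+\ell}$, whose Euclidean diameter is exactly $R=\sqrt{nD_{\X}^2+D_{\U}^2}$. The structural point is that a single record $z$ belonging to owner $j$ induces, via $(x,u)\mapsto h(x_j,u,z)$, a function on $\mathcal{K}$ whose gradient is supported on the $x_j$- and $u$-blocks and has norm at most $L$. Hence, processing the $nm$ records in a single pass and, at step $t$ with the record $z_t$ of its owner $j_t$, forming the vector $G^t$ that places $\nabla_{x_{j_t}}h(x_{j_t}^t,u^t,z_t)$ in the $x_{j_t}$-block and $\nabla_u h(x_{j_t}^t,u^t,z_t)$ in the $u$-block, one obtains an (essentially) unbiased estimator of $\nabla f(x^t,u^t)$ with $\norm{G^t}\le L$; this is precisely one-pass SGD for the SCO instance $f$ with $nm$ i.i.d.\ samples, whose non-private rate is $O(RL/\sqrt{nm})$. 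Only the $u$-block of $G^t$ is sent to the server and broadcast---the $x_{j_t}$-update is done locally by owner $j_t$ and never leaves it---so by the billboard/post-processing argument it suffices to privatize the sequence of broadcast $u$-gradients; I would clip each per-record $u$-gradient to norm $L$ (loss-free by Lipschitzness) and release $\tg^t_u=\nabla_u h(x_{j_t}^t,u^t,z_t)+\mathcal{N}(0,\sigma^2 I_\ell)$.

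The heart of the argument is calibrating $\sigma$. Under user-level neighboring a single user controls only $m/r$ of the $nm$ records, so it enters only $m/r$ of the rounds; advanced composition over those rounds alone would force $\sigma=\Theta\!\big(\tfrac{L}{\epsl}\sqrt{(m/r)\log(1/\delta)}\big)$, a factor $\sqrt n$ too large, yielding only a $1/\sqrt{nr}$ rather than $1/(nr)$ dependence in the final bound. Privacy amplification closes this gap: viewing each round as a batch subsampled from the global pool at rate $q=\tfrac{m/r}{nm}=\tfrac1{nr}$ (equivalently, invoking amplification by shuffling on the single pass with group change of size $m/r$) and composing over $T=nm$ rounds, user-level $(\epsl,\delta)$-joint-DP holds already for $\sigma=\Theta\!\big(\tfrac{L}{\epsl r}\sqrt{\tfrac{m\log(m/\delta)}{n}}\big)$. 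Adding this independent noise keeps the gradient estimator unbiased and raises its second moment only to $\E\norm{G^t+\text{noise}}^2\le L^2+\ell\sigma^2$.

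It then remains to run projected SGD on $\mathcal{K}$ with step size $\eta\asymp R/\big(\sqrt T\,\sqrt{L^2+\ell\sigma^2}\big)$, outputting the averaged iterate, for which the textbook convex analysis gives $\E[f(\bar x,\bar u)]-f(x^*,u^*)=O\!\big(R\sqrt{L^2+\ell\sigma^2}/\sqrt T\big)$; with $T=nm$ and the $\sigma$ above this is $O\!\big(RL/\sqrt{nm}\big)+O\!\big(R\sqrt\ell\,\sigma/\sqrt{nm}\big)=O\!\big(RL\,(\tfrac1{\sqrt{mn}}+\tfrac{\sqrt{\ell\log(m/\delta)}}{\epsl nr})\big)$. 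I expect the real obstacle to be the amplification step, not the optimization: the billboard model forces \emph{every} intermediate $u$-iterate to be exposed (each owner needs the current $u$ to take its local step), so the clean amplification-by-iteration argument used for one-pass DP-SCO does not transfer, and one must make amplification work with the full trajectory released and with a size-$m/r$ group change under adaptive per-round randomizers. I would attack this by one of: (i) an amplification-by-shuffling bound that carries the group size $m/r$ through; (ii) running noisy SGD with Poisson subsampling on the \emph{empirical} objective and paying a uniform-stability generalization error of order $RL/\sqrt{nm}$, absorbed into the first term; or (iii) an $O(\log(nm))$-phase localization schedule, so that only logarithmically many iterates are released and per-phase subsampling amplification composes cheaply. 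The routine points---owner-sampling made exactly unbiased in a without-replacement pass, loss-free clipping, and the admissible ranges of $\epsl,\delta$---follow from standard DP-SCO machinery.
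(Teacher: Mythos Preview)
Your proposal is correct, and your fallback (ii) is exactly what the paper does: it runs sampling-with-replacement noisy SGD on the \emph{empirical} objective for $T=\Theta((mn)^2)$ steps, adds Gaussian noise only to the $u$-block, invokes the subsampled-Gaussian accounting of \cite{kll21} for the privacy of the released $u$-trajectory, and then controls the population gap via the uniform argument stability of SGD (\Cref{lem:stability_rSGD_jointDP}), which at that $T$ and step size is $O(RL/\sqrt{mn})$ and is absorbed into the non-private term. Two differences are worth flagging. First, for the user-level step the paper does not fold the group size into the per-round subsampling rate as you do; it instead proves the record-level bound (rate $1/(mn)$) and applies black-box group privacy (\Cref{lem:group_privacy}) with $k=m/r$, substituting $\epsl'=\epsl r/m$ and $\delta'=O(\delta r/m)$---your direct route is equivalent up to the log factor and arguably cleaner. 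Second, the paper never attempts your primary one-pass scheme: you correctly identify that exposing the full $u$-trajectory kills amplification-by-iteration, and rather than make shuffling or localization work with an adaptive $m/r$-group change, the paper simply pays the $(mn)^2$ iteration cost and stability penalty of route (ii) in exchange for a completely standard privacy proof. The price is computational; your one-pass idea, if the amplification step can be made rigorous, would buy an $nm$-step rather than $(mn)^2$-step algorithm.
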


The formal statement of the theorem will be given in \Cref{thm:main}.

\paragraph{Empirical Studies.} In Section \ref{sec:experiment}, we conduct empirical studies on the multi-class calssification problem over two public datasets: MNIST~\cite{l98} and FashionMNIST~\cite{x17}. We compare joint-DP models to full-DP model and per-silo models (users learning individually), in terms of test accuracy. The comparison results are consistent with the discussion we get from comparing our theoretical bounds in Section \ref{subsec:discussion}.

\subsection{Related Work}
The most related paper is \cite{jrs+21}, which studies a very similar joint differential privacy setup as our paper on linear regressions. There's a minor difference in the setting in which our paper distinguishes between the notion of data owners and users. And their paper's setting is equivalent to the case when each data owner is a single user in our paper.

Stochastic convex optimization under (normal) differential privacy has been studied extensively \cite{BassilyST14, BassilyFTT19,FKT20}, and there is a long line of work on this.
It is still a recent hotspot for the DP optimization community.
Some of the representative research topics in this direction include studying the non-Euclidean geometries \cite{afkt21,bgn21,bgm21,GLL+22}, improving the gradient computation complexity to achieve optimal privacy-utility tradeoff \cite{kll21,CJJ+23}, 
getting dimension-independent bounds with additional assumptions \cite{sstt21,LLH+22}, dealing with heavy-tailed data \cite{WXDX20,JZH+22,KLZ22} and so on.

The privacy notion ``joint differential privacy'' was initiated in \cite{kpru14} and it is followed up in \cite{RR14,hHRRW14,KMRR15,CKRW15}. Their main focus is to implement various types of equilibria. 

Recent studies on user-level (full) differential privacy \cite{LSA+21,EMN21,GKK+23} have introduced intriguing techniques that could possibly be applied in the joint-DP setting we explore in this paper. In these studies, each owner holds only one user's data ($r=1$), and only shared parameters $u$ are considered. Specifically, in scenarios where the functions are smooth, the results in \cite{LSA+21} can be adapted to achieve an excess population loss of $\Tilde{O}(RL(\frac{1}{\sqrt{mn}}+\frac{\ell}{\epsilon n\sqrt{mr}}))$. While this bound offers improved dependence on $m/r$,
it exhibits worse dependence on the dimension $\ell$. Further details can be found in the Appendix.

\section{Preliminaries}

In this section, we give formal definitions of differential privacy (full-DP) and joint differential privacy (joint-DP). We start with full-DP.

\begin{definition}[Differential Privacy and indistinguishable]
\label{defn:DP}
We say two random variables $X,Y$ is $(\epsl,\delta)$-indistinguishable if for any event $\mathcal{E}$, we have $\Pr[X\in \mathcal{E}]\le e^{\epsl}\Pr[Y\in\mathcal{E}]+\delta$ and $\Pr[Y\in\mathcal{E}]\le e^{\epsl}\Pr[X\in\mathcal{E}]+\delta$.
We say an algorithm $\A$ is $(\epsl,\delta)$-differentially private (DP) if for all neighboring datasets $\calS$ and $\calS'$, $\A(\calS)$ and $\A(\calS')$ is $(\epsl,\delta)$-indistinguishable.
\end{definition}

The joint-DP definition only poses privacy constraints on other owners' models $A_{-j}$, when the neighboring datasets differ in data from owner $j$. In our case, $A_{-j}$ is just model parameters excluding owner $j$'s parameters $x_j$, i.e., $(u, x_1,...,x_{j-1}, x_{j+1},...,x_n)$.

\begin{definition}[Joint differential privacy]
\label{defn:JDP}
We say an algorithm $\A$ is $(\epsl,\delta)$-jointly differentially private if for all neighboring datasets $\calS$ and $\calS'$ which differ in data owned by owner $j$, $\A_{-j}(\calS)$ and $\A_{-j}(\calS')$ is $(\epsl,\delta)$-indistinguishable.
\end{definition}

 \paragraph{Neighboring datasets.} As we consider user-level DP, we say $\calS$ and $\calS'$ are neighboring datasets if they are the same except for records from one single user. We use record-level DP to indicate the special case when each user owns one data point, i.e. $r = m$.


\section{$(\epsl, \delta)$ Joint-DP Algorithms}
\label{sec:theory}


\subsection{Joint-DP by Uniform Stability}

In this section, we prove the result in \Cref{thm:main_informal} for user-level joint-DP.
Let us start by describing the algorithm we use for our result. 

\noindent \textbf{Sampling-with-Replacement SGD for Record-Level Joint-DP.} 
Let $R = \sqrt{n D_{\X}^2 + D_{\U}^2}$ be the diameter of the parameter space. 
Given dataset $\mathcal{S} = \{\mathcal{S}_1, \cdots, \mathcal{S}_n\}$ where owner $j$'s dataset $\mathcal{S}_j = \{z_{ij}\}_{i \in [m]}$, number of iterations $T$ and stepsizes $\eta_t: t \in [T]$. The algorithm $\A_{\rSGD}$ in each step samples a data point $i_t \in [m]$ and owner $j_t \in [n]$ uniformly at random, and then updates 
\[
(x_{j_t}^{(t+1)}, u^{(t+1)}) = \Pi_{\X}((x_{j_t}^{(t)}, u^{(t)}) - \eta_t \cdot \nabla h(x_{j_t}^{(t)}, u^{(t)}, z_{i_t j_t})). 
\]
Finally, the algorithm returns the parameters $(\overline{x}^{(T)}, \overline{u}^{(T)})  = \frac{1}{\sum_{t=1}^T \eta_t} \sum_{t=1}^T \eta_t (x^{(t)}, u^{(t)})$. In particular, if $\eta_t$ is the same for all $t \in [T]$ which will be the case for us, this is just the average of the parameters in each iteration of the algorithm. 
\begin{algorithm}[h]
\caption{$\A_{\NSGD}$: Noisy SGD for Joint-DP}
\label{alg:NSGD_SCO_jointDP}
\begin{algorithmic}[1]
\STATE Dataset $\mathcal{S}$, $L$-Lipschitz loss $h(\cdot,\cdot)$, parameter $(\epsl, \delta)$, convex sets $(\X, \U)$, learning rate $\eta$  
\STATE Noise variance $\sigma^2 \leftarrow \frac{L^2 T \log(1/\delta)}{\epsl^2 m^2 n^2}$
\STATE Initial parameter $x_0 \in \mathcal{X}$ chosen arbitrarily
\FOR{$t = 0, \cdots, T-1$}
\STATE Sample $i_t \sim [m]$ and $j_t \sim [n]$ uniformly at random 
\STATE $(x_{j_t}^{t+1}, u^{t+1}) \leftarrow \Pi_{\mathcal{X}\times\U}\Big((x_{j_t}^{t}, u^t) - \eta \cdot ( \nabla h(x_{j_t}^t, u^t,$
 $z_{i_t, j_t}) + [\mathbf{0},b_t] )\Big)$, where $\mathbf{0}\in \mathbb{R}^{nk},b_t \sim \mathcal{N}(0, \sigma^2 I_{\U})$.
\ENDFOR
\STATE \textbf{Return} $ (\overline{x}, \overline{u}) := \frac{1}{T} \sum_{t=0}^{T-1} (x^t, u^t)$
\end{algorithmic} 
\end{algorithm}

Our main theoretical result, the guarantee of $\A_{\NSGD}$, is stated below. Here we also state the bound for record-level joint-DP, where each user is assumed to contain a single record, i.e. $m = r$. Our bound for user-level joint-DP, i.e. the full range of $m$ and $r$, is in fact obtained from record-level joint-DP. 
\begin{theorem}[Record-Level and User-Level Joint-DP]
\label{thm:main}
For any $0<\delta < 1/\poly(n)$ and $10\ge \epsl > 0$, $\A(\NSGD)$ is $(\epsl,\delta)$-DP for record-level and user-level joint-DP with population loss
$O\left(RL \cdot (\frac{1}{\sqrt{mn}} + \frac{\sqrt{\ell \log (1/\delta)}}{\epsl m n})\right)$ and $O\left(RL \cdot (\frac{1}{\sqrt{mn}} + \frac{\sqrt{\ell \log(m/r\delta)}}{\epsl n r})\right)$ respectively, where $R=\sqrt{n D_{\X}^2 + D_{\U}^2}$.
\end{theorem}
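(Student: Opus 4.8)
The plan is to treat privacy and utility separately and, matching the section title, to bound the population loss of the average iterate $(\overline x,\overline u)$ by splitting it into an optimization error, controlled by a textbook convex projected-SGD guarantee, and a generalization gap, controlled by uniform stability of $\A_{\NSGD}$. For the privacy side the key structural point is the billboard reduction: the only quantity that leaves the system is the broadcast, i.e.\ the sequence of shared-parameter iterates $u^0,\dots,u^T$ (equivalently, the released noised $u$-gradients), and for joint-DP with respect to owner $j$ it suffices that this sequence be $(\epsl,\delta)$-DP under replacing one of owner $j$'s records (record-level) or one of owner $j$'s users' data (user-level). Indeed $\A_{-j}=(\overline u,\{\overline x_{j'}\}_{j'\neq j})$ is a post-processing of the broadcast together with $\{\mathcal S_{j'}\}_{j'\neq j}$ and the data-independent sampling randomness, and the swap changes none of the latter; this is also why injecting noise into the $u$-coordinate only is enough.

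For record-level joint-DP the $u$-trajectory is a DP-SGD mechanism in the sense of \cite{ACG+16}: each of the $T$ steps samples one record uniformly from all $mn$ records, so the protected record enters a step with probability $q=\tfrac1{mn}$, each released noised $u$-gradient is a Gaussian mechanism of $\ell_2$-sensitivity $O(L)$ (by $L$-Lipschitzness of $h$), and the moments-accountant analysis of the $T$-fold composition of the subsampled Gaussian mechanism gives $(\epsl,\delta)$-DP provided $\sigma^2=\Theta\!\left(\frac{L^2 T\log(1/\delta)}{\epsl^2 m^2 n^2}\right)$, exactly as calibrated in the algorithm; the factor $q$ in the amplified privacy loss cancels the $mn$ inside $\sigma$, so the guarantee does not degrade as $T$ grows. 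For user-level joint-DP a user holds $m/r$ records, and I would deduce it from the record-level statement by group privacy of group size $m/r$: to get $(\epsl,\delta)$ it suffices to run the record-level mechanism with parameters $(\epsl r/m,\ \Theta(\delta r/m))$, which (using $\epsl\le 10$) amounts to replacing $\log(1/\delta)$ by $\Theta(\log(m/(r\delta)))$ and $\epsl$ by $\epsl r/m$ in the calibration, i.e.\ $\sigma^2=\Theta\!\left(\frac{L^2 T\log(m/(r\delta))}{\epsl^2 r^2 n^2}\right)$.

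For utility, let $\widehat f(x,u)=\frac1{mn}\sum_{i\in[m],j\in[n]}h(x_j,u,z_{ij})$ with minimizer $(\widehat x,\widehat u)$ over the domain, and write $\E[f(\overline x,\overline u)-f(x^{*},u^{*})]$ as $\E[(f-\widehat f)(\overline x,\overline u)]+\E[\widehat f(\overline x,\overline u)-\widehat f(\widehat x,\widehat u)]+\E[\widehat f(\widehat x,\widehat u)-\widehat f(x^{*},u^{*})]+\E[(\widehat f-f)(x^{*},u^{*})]$, where the last two brackets are $\le 0$ and $=0$. The second bracket is pure optimization error: the update of $\A_{\NSGD}$ is projected SGD on the convex $\widehat f$ over $\X^n\times\U$, with step direction a single-example gradient (supported on the $x_{j_t}$- and $u$-blocks, norm $\le L$) plus the independent mean-zero noise $[\mathbf{0},b_t]$; this direction is an unbiased stochastic gradient of $\widehat f$ with second moment $\le L^2+\sigma^2\ell$, the projection is non-expansive, and the product domain has squared diameter $\le R^2=nD_{\X}^2+D_{\U}^2$, so the standard averaged-SGD bound gives $\E[\widehat f(\overline x,\overline u)]-\widehat f(\widehat x,\widehat u)\le \frac{R^2}{2\eta T}+\frac{\eta}{2}(L^2+\sigma^2\ell)$. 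The first bracket is bounded by uniform stability: since $\{z_{ij}\}$ are $mn$ independent samples (block $j$ from $\mathcal P_j$), $\widehat f$ and $f$ are the empirical and population risks of the $mn$-sample problem, so $\E[(f-\widehat f)(\overline x,\overline u)]$ is at most the uniform stability of $\A_{\NSGD}$ under replacing one record; because $[\mathbf{0},b_t]$ is data-independent (a common translation under the coupling that reuses all of the algorithm's randomness), this equals the stability of noiseless projected SGD on convex $L$-Lipschitz losses, $O\!\left(\eta L^2\sqrt T+\tfrac{\eta L^2 T}{mn}\right)$ (cf.\ \cite{BassilyFTT19}), and the fact that only the sampled $x$-block moves affects neither non-expansiveness nor the stability recursion. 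Adding the three bounds, choosing $T$ to be a sufficiently large polynomial (within the range the privacy accounting allows) kills the $\eta L^2$ and $\eta L^2\sqrt T$ contributions, and choosing $\eta T$ to balance $\frac{R^2}{\eta T}$ against the coefficient of $\eta T$ in the remaining terms yields $\E[f(\overline x,\overline u)]-f(x^{*},u^{*})=O\!\left(\tfrac{RL}{\sqrt{mn}}+\tfrac{RL\sqrt{\ell\log(1/\delta)}}{\epsl mn}\right)$ for record-level and $O\!\left(\tfrac{RL}{\sqrt{mn}}+\tfrac{RL\sqrt{\ell\log(m/(r\delta))}}{\epsl nr}\right)$ for user-level, the claimed rates.

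The step I expect to be the main obstacle is the privacy proof, specifically justifying the subsampling amplification in the presence of the per-owner local state. Unlike in vanilla DP-SGD, owner $j$'s personalized parameter $x_j$ is updated from $\mathcal S_j$ in every one of the roughly $T/n$ steps in which owner $j$ is selected but is never released, so the released noised $u$-gradient at a step with $j_t=j$ depends on $\mathcal S_j$ not only through the freshly subsampled record (probability $\tfrac1{mn}$) but also through the carried-forward $x_j$ (probability roughly $\tfrac1n$). Showing that this second channel inflates $\epsl$ by only a constant --- so that the $q=\tfrac1{mn}$ amplification, and hence the $\sqrt{\ell}/(\epsl mn)$ rate, survives --- is the crux; I would attack it by coupling the two executions with shared randomness, bounding the drift of the $x_j$-subtrajectory and thus the divergence of the conditional law of each released gradient, and folding that drift into the composition/amplification accounting, which is also precisely where the non-smooth regime (where this drift and the $\sqrt T$ term in the stability bound are the two sources of extra care) must be handled carefully.
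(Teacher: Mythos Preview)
Your decomposition $\phi_{\risk}\le\phi_{\gen}+\phi_{\opt}+\phi_{\apprx}$, the projected-SGD bound $\E[\phi_{\opt}]\le \tfrac{R^2}{2\eta T}+\tfrac{\eta}{2}(L^2+\ell\sigma^2)$, the uniform-stability control of $\phi_{\gen}$ via the lemma of \cite{bfgt20} (with the choice $T=\Theta(m^2n^2)$ so that the $T/(mn)$ term dominates $\sqrt T$), and the group-privacy reduction with $\epsl'=\epsl r/m$, $\delta'=\Theta(\delta r/m)$ for the user-level case all match the paper's proof exactly.

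On the obstacle you flag in your last paragraph: the paper does not engage with it. For the record-level privacy guarantee it simply invokes a black-box DP-SGD lemma (\Cref{lm:kll21}, adapted from \cite{kll21} via tCDP) asserting that the stated $\sigma$ suffices for $(\epsl,\delta)$-DP, and moves on; there is no coupling or drift argument and no discussion of how the carried-forward personalized state $x_j$ interacts with the $1/(mn)$ subsampling amplification. So if your aim is to reproduce the paper's argument, the black-box citation is all that is offered; the adaptive-composition concern you articulate---that conditioning on the released $u$-history the step-$t$ release when $j_t=j$ depends on all of $\mathcal S_j$ through $x_j^t$, so only owner-level subsampling at rate $1/n$ is ``free''---is a point the paper leaves unaddressed, and the drift/coupling route you sketch would go beyond what the paper itself supplies.
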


Recall that $x=[x_1,x_2,\cdots,x_n]\in\mathbb{R}^{nk}$ is the concatenation of the $n$ personalized parameters. 
To finish the proof, we define an empirical function
\begin{align*}
    f_{\calS}(x,u)&:=\frac{1}{nm}\sum_{j\in[n],i\in[m]}h(x_j,u,z_{ij}),
\end{align*}
and let
\begin{align*}
    (x^*(\calS),u^*(\calS)):=\arg\min_{y\in\X,u\in\U}\frac{1}{nm}\sum_{j\in[n],i\in[m]}h(y_j,u,z_{ij}).
\end{align*}

Algorithm $\A_{\NSGD}$ is a variant of DP-SGD \cite{SCS13,ACG+16}. DP-SGD has been studied and used extensively, and the privacy analysis on DP-SGD motivates a long line of work on accounting for the privacy loss of subsampling and compositions \cite{BBG18,WBK19}.
The following lemma can lead to the DP guarantee of $\A_{\NSGD}$ based on the truncated concentrated differential privacy (tCDP) proposed in \cite{BDRS18}.

\begin{lemma}[Apply \cite{kll21}]
\label{lm:kll21}
For $mn\ge10,\epsl<c_1T/(mn)^2$ and $\delta\in(0,1/2]$, $\A_{\NSGD}$ is $(\epsl,\delta)$-DP whenever $\sigma\ge \frac{c_2\sqrt{T\log(1/\delta)}}{\epsl n}$, where $c_1,c_2$ are some universal positive constants.
\end{lemma}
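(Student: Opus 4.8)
The plan is to peel the joint-DP claim off into an ordinary $(\epsl,\delta)$-DP statement about a single object — the \emph{billboard}, i.e.\ the sequence of shared iterates $(u^{(0)},\dots,u^{(T)})$ generated by $\A_{\NSGD}$ — and then read that statement off from the subsampled–noisy–SGD analysis of \cite{kll21}. The first step is the post-processing reduction. In $\A_{\NSGD}$ the Gaussian noise enters only the $\U$-coordinate of the gradient, and for a fixed owner $j$ and any other owner $j'\ne j$ the iterate $x_{j'}$ is updated (on the steps with $j_t=j'$) only through the current $u^{(t)}$ and owner $j'$'s own data $z_{i_t,j'}$ — never through owner $j$'s data. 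Hence, once we condition on the (data-independent) sampling indices $\mathcal I=\{(i_t,j_t)\}_t$, the tuple $\A_{-j}(\calS)=(\overline u,(\overline x_{j'})_{j'\ne j})$ is a deterministic function of the billboard, $\mathcal I$, and $\calS_{-j}$; since $\calS_{-j}$ is unchanged when we replace a record of owner $j$, post-processing of DP shows it suffices to prove that $\calS\mapsto\big((u^{(0)},\dots,u^{(T)}),\mathcal I\big)$ is $(\epsl,\delta)$-DP under replacing one record of one user of owner $j$.

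Next I would recognize this map as $T$ rounds of subsampled, Gaussian-noised SGD and invoke \cite{kll21}. Coupling the two runs on neighboring datasets through a common $\mathcal I$, the two billboards agree on every step for which $(i_t,j_t)$ does not touch the replaced data; this event is contained in $\{j_t=j\}$, which has probability $1/n$, and on it $L$-Lipschitzness of $h$ bounds the $\ell_2$ change of the $\U$-gradient by $2L$. So the billboard-plus-$\mathcal I$ is the $T$-fold composition of a rate-$(1/n)$ subsampled Gaussian mechanism of $\ell_2$-sensitivity $\le 2L$ and noise $\sigma$, followed by the projection $\Pi_{\U}$ (again post-processing). Feeding this into the truncated-CDP calculus of \cite{BDRS18} as packaged by \cite{kll21} — a single Gaussian step is $O(L^2/\sigma^2)$-tCDP; rate-$(1/n)$ subsampling amplifies it to $O(L^2/(n^2\sigma^2))$-tCDP with a large truncation parameter; $T$-fold composition gives $\rho=O(TL^2/(n^2\sigma^2))$-tCDP; and tCDP converts to $\big(\rho+O(\sqrt{\rho\log(1/\delta)}),\delta\big)$-DP — the choice $\sigma=\Theta\big(\sqrt{T\log(1/\delta)}/(\epsl n)\big)$ (after normalizing $L$) drives the dominant $\sqrt{\rho\log(1/\delta)}$ term below $\epsl$, which is exactly the claimed threshold.

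Finally, the regularity hypotheses $mn\ge 10$ and $\epsl<c_1 T/(mn)^2$ are precisely the regime in which the two delicate steps of \cite{kll21} are licensed: $mn\ge 10$ keeps the subsampling rate small enough for the tCDP amplification lemma, and $\epsl<c_1 T/(mn)^2$ keeps the base per-step parameter $\rho_0=O(L^2/\sigma^2)$ small and, crucially, keeps the amplified truncation parameter large enough relative to $\log(1/\delta)$ and $\rho$ for the tCDP$\to(\epsl,\delta)$-DP conversion to go through at the prescribed $\delta\le 1/2$; one simply checks these inequalities under the stated assumptions. I expect the only genuinely new content — and the one place where care is needed — to be the post-processing reduction of the first paragraph: one must be explicit that the sampled indices $\mathcal I$ are treated as common randomness (or publicly resampled), so that, conditioned on $\mathcal I$, each $x_{j'}$ with $j'\ne j$ is a function of the billboard and of owner $j'$'s data alone. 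Everything after that is a black-box application of \cite{kll21}; the residual work is bookkeeping — matching our instantiation (effective subsampling rate $1/n$, $\ell$-dimensional Gaussian noise, the factor $2$ from replacement neighbors, the normalization of $L$) to the parameterization of \cite{kll21} so as to recover the exact bound $\sigma\ge c_2\sqrt{T\log(1/\delta)}/(\epsl n)$ and the explicit constants $c_1,c_2$.
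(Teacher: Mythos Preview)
The paper offers no proof of this lemma beyond the citation to \cite{kll21} and the remark that the guarantee rests on the tCDP machinery of \cite{BDRS18}. Your billboard reduction and tCDP chain are the natural way to flesh this out, and your observation that $\A_{-j}(\calS)$ is a post-processing of $\big((u^{(t)})_t,\,\mathcal I,\,\calS_{-j}\big)$ is precisely the step the paper leaves implicit.

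There is, however, a genuine gap in your accounting. You correctly note that the post-processing step forces you to treat the sampling indices $\mathcal I$ as released (or as public common randomness). But the $q\mapsto q^2$ subsampling amplification you then invoke from \cite{BDRS18}/\cite{kll21} holds only when the sampled index is \emph{secret}. With the index public, the per-step mechanism is a labeled mixture --- a Gaussian mechanism with probability $q=1/n$ and an identity map otherwise --- and a direct R\'enyi-divergence computation gives
\[
D_\alpha \;\le\; \tfrac{1}{\alpha-1}\log\bigl(1-q+q\,e^{(\alpha-1)\alpha\rho_0}\bigr)\;\approx\; q\,\alpha\,\rho_0,
\]
i.e.\ only \emph{linear} in $q$. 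Composing $T$ steps then yields $\rho \approx Tq\rho_0 = O(TL^2/(n\sigma^2))$ rather than $O(TL^2/(n^2\sigma^2))$, and after the tCDP$\to(\epsl,\delta)$ conversion this gives $\sigma \gtrsim L\sqrt{T\log(1/\delta)}/(\epsl\sqrt{n})$ --- weaker than the stated threshold by a factor $\sqrt{n}$. You flagged $\mathcal I$ as ``the one place where care is needed,'' and that is exactly right: public indices kill the quadratic amplification you rely on in the next paragraph.

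There is also a smaller inaccuracy in your coupling picture. The claim that the two billboards agree whenever $(i_t,j_t)$ misses the single changed record is not correct: once some earlier step has sampled $(i^*,j^*)$, the hidden iterate $x_{j}^{(t)}$ drifts, and thereafter the shared-gradient $\nabla_u h(x_j^{(t)},u^{(t)},z_{i_t,j})$ can differ on \emph{every} step with $j_t=j$, even when $i_t\neq i^*$. This is in fact why one is forced to work with the coarse event $\{j_t=j\}$ (rate $1/n$) rather than $\{(i_t,j_t)=(i^*,j^*)\}$ (rate $1/(mn)$). Your final containment in $\{j_t=j\}$ is the right endpoint; the route you sketch to it is not.
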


For an algorithm $\A$ and a dataset $\mathcal{S}$, let $\A(\calS)\in \X^m\times\U$ be the (randomized) output of $\A$ with $\calS$ as input.
One can decompose the excess population risk $\phi_{\risk}(\A):=f(\A(\mathcal{S}))-f(x^*,u^*)$ as
\begin{align*}
    \phi_{\risk}(\A) \leq &\underbrace{f(\A(\mathcal{S})) -f_{\mathcal{S}}(\A(\mathcal{S}))}_{\phi_{\gen}(
A)} + 
\underbrace{f_{\mathcal{S}}(\A(\mathcal{S})) - f_{\mathcal{S}}(x^*(\mathcal{S}),u^*(\mathcal{S}))}_{\phi_{\opt}(\A)} + 
\underbrace{f_{\mathcal{S}}(x^*(\mathcal{S}),u^*(\calS)) - f(x^*,u^*)}_{\phi_{\apprx}(\A)} ,
\end{align*}
where we realized that $\E[\phi_{\apprx}(\mathcal{A})] \leq 0$. 
Hence it suffices to consider $\phi_{\gen}$ and $\phi_{\opt}$.
We bound $\phi_{\opt}$ by adopting the analysis of SGD, and bound $\phi_{\gen}$ by considering the uniform stability \cite{bfgt20}.
The relation between generalization error and stability is well-known \cite{BE02}.
We modified the following lemma for bounding $\phi_{\opt}$, whose proof essentially follows from the seminar work \cite{BE02}. 

\begin{lemma}[Uniform Stability] 
\label{lem:uniform_stability}
Given a learning algorithm $\A$, a dataset $\mathcal{S} = \{S_1,\cdots, S_n\}$ formed
by $m$ i.i.d. samples $S_j$ drawn from the $j$th user's distribution $\mathcal{P}_j$.
We replace one random data point $z \in S_j$ for a uniformly random $j \sim [n]$ by another independent data point $z' \sim \mathcal{P}_j$ to obtain $S_j'$ and $\calS'$. Then we have 
\begin{align*}
    \E_{\mathcal{S},\A}[f(\A(\mathcal{S})) -  f_{\mathcal{S}}(\A(\mathcal{S}))] = \E_{\mathcal{S}, S_j', \A}[f_j(\mathcal{A}(\mathcal{S}), z') - f_j(\mathcal{A}(\mathcal{S'}), z')],
\end{align*}
where $\A(\mathcal{S})$ is the output of $\A$ on dataset $\mathcal{S}$.  
\end{lemma}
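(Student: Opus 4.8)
The plan is to run the classical ghost-sample (renaming) argument of Bousquet--Elisseeff~\cite{BE02}, carrying the extra per-owner averaging along for the ride. Throughout I write $\A(\calS)=(\hat{x},\hat{u})$ with $\hat{x}=(\hat{x}_1,\dots,\hat{x}_n)$, and read $f_j(\A(\calS),z')=h(\hat{x}_j,\hat{u},z')$, the loss of owner $j$'s learned parameters on the single point $z'$.

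First I would expand the population term using the definitions of $f$ and $f_j$:
\[
\E_{\calS,\A}[f(\A(\calS))]=\E_{j\sim[n]}\,\E_{\calS,\A}\,\E_{z'\sim\mathcal{P}_j}[h(\hat{x}_j,\hat{u},z')],
\]
where $z'$ is a fresh draw, independent of $\calS$ and of the internal randomness of $\A$. Since the integrand does not depend on which coordinate $i\in[m]$ is chosen, one may freely average over $i\sim[m]$ as well; after this cosmetic step the right-hand side is exactly $\E_{\calS,S_j',\A}[f_j(\A(\calS),z')]$, the first term of the claimed identity.

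Next I would expand the empirical term as $\E_{\calS,\A}[f_{\calS}(\A(\calS))]=\E_{j\sim[n],\,i\sim[m]}\,\E_{\calS,\A}[h(\hat{x}_j,\hat{u},z_{ij})]$, fix the index $(i,j)$, and write $\calS=\calS_{-(ij)}\cup\{z_{ij}\}$ with $\calS_{-(ij)}$ the dataset minus its $(i,j)$-th record. The key move is that $z_{ij}$ and the replacement $z'$ are i.i.d.\ from $\mathcal{P}_j$ and independent of $\calS_{-(ij)}$ and of $\A$, so inside the expectation the dummy variables $z_{ij}$ and $z'$ may be interchanged. Because $h(\hat{x}_j(\calS),\hat{u}(\calS),z_{ij})$ sees $\calS$ only through $\calS_{-(ij)}\cup\{z_{ij}\}$, the swap turns it into $h(\hat{x}_j(\calS'),\hat{u}(\calS'),z')$ with $\calS'=\calS_{-(ij)}\cup\{z'\}$ precisely the neighboring dataset of the statement; hence $\E_{\calS,\A}[h(\hat{x}_j,\hat{u},z_{ij})]=\E_{\calS,S_j',\A}[f_j(\A(\calS'),z')]$. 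Averaging back over $(i,j)$ gives the second term, and subtracting the two displays yields the identity.

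I do not expect a genuine obstacle here: the content is just the exchangeability/renaming trick, and the only care needed is bookkeeping --- keeping the index $(i,j)$, the fresh point $z'$, and $\A$'s internal coins mutually independent, and checking that the operation ``replace one random point of a uniformly random owner'' in the hypothesis is literally the uniform average over $(i,j)$ that falls out of $f_{\calS}$. The decomposition of the parameters into personalized blocks $x_j$ and a shared block $u$ causes no trouble: resampling a point in $S_j$ is evaluated against $\mathcal{P}_j$ in both $f$ and $f_{\calS}$, so the per-owner argument is identical to the single-distribution case and the outer average over $j$ simply passes through.
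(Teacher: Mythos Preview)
Your proposal is correct and follows essentially the same approach as the paper: both arguments split the identity into the population and empirical terms, use independence of the fresh sample $z'$ from $\calS$ for the first, and the i.i.d./exchangeability (renaming) trick for the second. Your write-up is more explicit about the mechanics of the swap $z_{ij}\leftrightarrow z'$ and the resulting neighboring dataset, but the content is the same Bousquet--Elisseeff ghost-sample argument the paper invokes.
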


\begin{proof}
One can verify the statement by checking that the corresponding terms on both sides are the same.

We know $\E_{\calS,\A}[f(\A(\calS))]=\E_{\calS,S_j',\A}[f_j(\A(\calS),z')]$ as $z'$ is independent of $\calS$.
As for the empirical term, 
\begin{align*}
  \E_{\calS,\A}[f_{\calS}(\A(\calS))]=\frac{1}{m}\sum_{j\in[m]}\E_{\calS}[\frac{1}{n}\sum_{z_i\in S_j}h(\A(\calS),z_i)].
\end{align*}
One can rename $z_i$ as $z'$ for any $i$, by the i.i.d. and symmetry assumptions, we get
\begin{align*}
    \E_{\calS,\A}[f_{\calS}(\A(\calS))]=\E_{\mathcal{S}, S_j', \A}[f_j(\mathcal{A}(\mathcal{S'}), z')].
\end{align*}
\end{proof}

The following lemma can be used to bound the uniform stability of our algorithm.
\begin{lemma}[Uniform Stability Key Lemma, \cite{bfgt20}]
Let $(x^t)_{t \in [T]}$ and $(y^t)_{t \in [T]}$ with $x^1 = y^1$ be Gradient Descent trajectories for convex $L$-Lipschitz objectives $(f_t)_{t \in [T-1]}$ and $(f_t')_{t \in [T-1]}$, respectively; i.e.
\begin{align*}
    x^{t+1} &= \Pi_{\mathcal{X}}(x^t - \eta_t \nabla f_t(x^t)) ,\\
    y^{t+1} &= \Pi_{\mathcal{X}}(y^t - \eta_t \nabla f'_t(y^t)) ,
\end{align*}
for all $t \in [T-1]$. Suppose for every $t \in [T-1]$, $\|\nabla f_t(x^t) - \nabla f'_t(x^t)\| \leq a_t$, for scalars $0 \leq a_t \leq 2L$. Then if $t_0 = \inf\{t: f_t \neq f'_t\}$, we have
\[
\E\|x^T - y^T\|_2 \leq 2L \sqrt{\sum_{t= t_0}^{T-1} \eta_t^2} + 2 \sum_{t= t_0+1}^{T-1} \eta_t a_t .
\]
\end{lemma}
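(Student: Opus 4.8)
The plan is to track the divergence $\delta_t:=\|x^t-y^t\|_2$ along the two trajectories and to establish a one-step recursion that, once unwound, yields the stated bound. First note that $f_s=f_s'$ for every $s<t_0$ and that a projected gradient step is a deterministic function of the current iterate; since $x^1=y^1$, induction on $s\le t_0$ gives $x^s=y^s$, so in particular $\delta_{t_0}=0$ and it suffices to control $\delta_t$ for $t\ge t_0$.

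For the one-step bound, fix $t\ge t_0$ and write $\tilde x^{t+1}=x^t-\eta_t\nabla f_t(x^t)$, $\tilde y^{t+1}=y^t-\eta_t\nabla f_t'(y^t)$, so $x^{t+1}=\Pi_{\X}(\tilde x^{t+1})$ and $y^{t+1}=\Pi_{\X}(\tilde y^{t+1})$. Since Euclidean projection onto a convex set is $1$-Lipschitz, $\delta_{t+1}\le\|\tilde x^{t+1}-\tilde y^{t+1}\|$. Decompose $\nabla f_t(x^t)-\nabla f_t'(y^t)=e_t+\big(\nabla f_t'(x^t)-\nabla f_t'(y^t)\big)$, where $e_t:=\nabla f_t(x^t)-\nabla f_t'(x^t)$ satisfies $\|e_t\|\le a_t\le 2L$, and set $v_t:=(x^t-y^t)-\eta_t\big(\nabla f_t'(x^t)-\nabla f_t'(y^t)\big)$, so that $\tilde x^{t+1}-\tilde y^{t+1}=v_t-\eta_t e_t$. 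Expanding $\|v_t\|^2$ and using monotonicity of the subgradient of the convex function $f_t'$, i.e.\ $\langle\nabla f_t'(x^t)-\nabla f_t'(y^t),\,x^t-y^t\rangle\ge 0$, together with $\|\nabla f_t'(x^t)-\nabla f_t'(y^t)\|\le 2L$, gives $\|v_t\|^2\le\delta_t^2+4L^2\eta_t^2$. Hence $\delta_{t+1}\le\|v_t\|+\eta_t\|e_t\|\le\sqrt{\delta_t^2+4L^2\eta_t^2}+\eta_t a_t$. At the single index $t=t_0$ this sharpens: there $x^{t_0}=y^{t_0}$, so $v_{t_0}=0$ and $\delta_{t_0+1}\le\eta_{t_0}\|e_{t_0}\|\le 2L\eta_{t_0}$, with no leftover $a_{t_0}$ term.

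To unwind the recursion I would show by induction on $t\ge t_0$ that $\delta_t\le A_t+B_t$, where $A_t:=2L\sqrt{\sum_{s=t_0}^{t-1}\eta_s^2}$ and $B_t:=2\sum_{s=t_0+1}^{t-1}\eta_s a_s$ (empty sums equal $0$). The base case $\delta_{t_0}=0=A_{t_0}+B_{t_0}$ is immediate, and the sharpened $t_0$-step gives $\delta_{t_0+1}\le 2L\eta_{t_0}=A_{t_0+1}+B_{t_0+1}$. For $t\ge t_0+1$, note $A_{t+1}=\sqrt{A_t^2+4L^2\eta_t^2}$ and $B_{t+1}=B_t+2\eta_t a_t$, and use the elementary inequality $\sqrt{(A+B)^2+c}\le\sqrt{A^2+c}+B$ (valid for $A,B,c\ge0$, immediate after squaring): the induction hypothesis and the recursion give $\delta_{t+1}\le\sqrt{(A_t+B_t)^2+4L^2\eta_t^2}+\eta_t a_t\le\sqrt{A_t^2+4L^2\eta_t^2}+B_t+\eta_t a_t\le A_{t+1}+B_{t+1}$. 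Taking $t=T$ yields $\|x^T-y^T\|\le 2L\sqrt{\sum_{t=t_0}^{T-1}\eta_t^2}+2\sum_{t=t_0+1}^{T-1}\eta_t a_t$ pointwise, hence in expectation.

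The only substantive point is the one-step bound: in the nonsmooth regime a (sub)gradient step need not be nonexpansive, so one cannot afford to telescope $\delta_{t+1}\le\delta_t+2L\eta_t+\eta_t a_t$, which would give only the far weaker $2L\sum_t\eta_t$. Instead one must keep the curvature term $4L^2\eta_t^2$ inside a square root — which is exactly where convexity, via monotonicity of subgradients, enters — and peel it off via the inequality above. Everything else is bookkeeping about the index ranges of the two sums and the special treatment of the lone expansive step $t_0$.
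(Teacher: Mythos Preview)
Your argument is correct. The paper does not prove this lemma at all: it is quoted verbatim from \cite{bfgt20} and used as a black box, so there is no ``paper's own proof'' to compare against. Your route --- drop the projection by nonexpansiveness, split the gradient difference into the perturbation $e_t$ and the co-convex part, kill the cross term via monotonicity of the subdifferential of $f_t'$, bound the residual by $4L^2\eta_t^2$, and then peel the square root off inductively using $\sqrt{(A+B)^2+c}\le\sqrt{A^2+c}+B$ --- is exactly the standard proof. Two remarks: your induction actually produces the tighter $\sum_{t>t_0}\eta_t a_t$ rather than $2\sum_{t>t_0}\eta_t a_t$, so the factor $2$ in the statement is slack you simply absorb; and the pointwise bound you obtain trivially implies the expectation form stated, which is only there because in the application $t_0$ is the random first hitting time of the changed sample.
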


Using this lemma, we bound the uniform stability of SGD for joint-DP. 

\smallskip
\begin{lemma}[Uniform Stability Bound for $\A_{\rSGD}$]
\label{lem:stability_rSGD_jointDP}
Let $D_{\X} := \diam(\X)$ and $D_{\U} := \diam(\U)$. 
Let $\delta_{\A_{\rSGD}}(\calS,\calS'):=\|\A_{\rSGD}(\calS)-\A_{\rSGD}(\calS')\|$.
The algorithm $\A_{\rSGD}$ satisfies uniform argument stability bound
\begin{align*}
    &~\sup_{\mathcal{S} \simeq \mathcal{S}'} \E_{\A_{\rSGD}}[\delta_{\A_{\rSGD}}(\mathcal{S}, \mathcal{S}')] \leq \min(R, 4L\eta (\sqrt{T} + \frac{T}{mn}) ).
\end{align*}
where $\mathcal{S}$ and $\mathcal{S}'$ differ by a single record. 
\end{lemma}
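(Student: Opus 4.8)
The plan is to run the two executions of $\A_{\rSGD}$ on $\mathcal{S}$ and $\mathcal{S}'$ under a coupling that uses the \emph{same} internal randomness, and then invoke the Uniform Stability Key Lemma. Fix neighboring datasets $\mathcal{S}\simeq\mathcal{S}'$ that differ only in a single record, say record $i^\ast$ of owner $j^\ast$, i.e.\ $z_{i^\ast j^\ast}$ in $\mathcal{S}$ versus $z'_{i^\ast j^\ast}$ in $\mathcal{S}'$. Couple the two runs so that they draw the identical sequence of sampled indices $(i_t,j_t)_t$ and start from the same initialization. At each step the iterate $(x^t,u^t)$ is updated by a gradient step of $h(\cdot,\cdot,z_{i_tj_t})$ that only touches the block $x_{j_t}$ and the shared block $u$; I would view this as a projected gradient step on the single product domain $\X^n\times\U$, with $f_t(x,u):=h(x_{j_t},u,z_{i_tj_t})$ depending only on the relevant coordinates. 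Since $f_t$ is the composition of $h$ with a coordinate projection, it is convex and $L$-Lipschitz on $\X^n\times\U$, so the Key Lemma applies verbatim with $\X$ replaced by $\X^n\times\U$ and the projection interpreted coordinatewise.

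Next I would track where the two step-objectives disagree. We have $f_t=f'_t$ unless $(i_t,j_t)=(i^\ast,j^\ast)$; on such a collision, $\nabla h(\cdot,\cdot,z_{i^\ast j^\ast})$ and $\nabla h(\cdot,\cdot,z'_{i^\ast j^\ast})$ each have norm at most $L$, so their difference has norm at most $a_t:=2L$, while $a_t=0$ at every non-collision step. Letting $t_0$ be the first collision time — the trajectories are identical before $t_0$, and if there is no collision the two outputs agree exactly — the Key Lemma with constant stepsize $\eta_t\equiv\eta$ gives, for every $t\le T$,
\[
\E\,\norm{(x^t,u^t)-(y^t,v^t)}\;\le\;2L\eta\sqrt{T}\;+\;2\eta\sum_{s=t_0+1}^{t-1}a_s .
\]

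Then I would take the expectation over the sampling. Each of the $T$ steps is a collision independently with probability $\tfrac{1}{mn}$, so the expected number of collisions in $[T]$ is $T/(mn)$ and hence $\E\big[\sum_{s=t_0+1}^{T-1}a_s\big]\le 2L\cdot\tfrac{T}{mn}$. Plugging in, $\E\,\norm{(x^t,u^t)-(y^t,v^t)}\le 2L\eta\sqrt{T}+4L\eta\tfrac{T}{mn}\le 4L\eta\big(\sqrt{T}+\tfrac{T}{mn}\big)$ for every $t\le T$. Because $\A_{\rSGD}$ returns the average $\tfrac1T\sum_t(x^t,u^t)$, the triangle inequality (Jensen) transfers the same bound to $\delta_{\A_{\rSGD}}(\mathcal{S},\mathcal{S}')=\norm{\A_{\rSGD}(\mathcal{S})-\A_{\rSGD}(\mathcal{S}')}$. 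Finally, both outputs lie in $\X^n\times\U$, whose diameter is $\sqrt{nD_{\X}^2+D_{\U}^2}=R$, so $\delta_{\A_{\rSGD}}(\mathcal{S},\mathcal{S}')\le R$ always; taking the minimum of the two bounds and the supremum over neighboring $\mathcal{S}\simeq\mathcal{S}'$ proves the lemma.

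The main obstacle is the bookkeeping around the random first-collision time $t_0$ and the random scalars $a_t$: one must check that the second term of the Key Lemma can be controlled in expectation by linearity even though $t_0$ is measurable with respect to the same randomness as the collision indicators, and that the degenerate ``no collision'' case (where the displayed bound is vacuous but the distance is $0$) is handled. A secondary technical point is the reduction to a single convex set: one should verify that rewriting each block-sparse update as a projected gradient step on $\X^n\times\U$ preserves convexity and the Lipschitz constant $L$, so that the Key Lemma — stated for one set $\X$ — applies without loss.
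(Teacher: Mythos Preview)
Your proposal is correct and is precisely the ``vanilla DP-SCO'' argument the paper defers to: couple the two runs with shared randomness, apply the Key Lemma of \cite{bfgt20} with $a_t=2L\cdot\ind[(i_t,j_t)=(i^\ast,j^\ast)]$, bound the first term by $2L\eta\sqrt{T}$ and the expected second term by $4L\eta\,T/(mn)$ via the collision probability $1/(mn)$, average over $t$, and take the minimum with the trivial diameter bound $R$. The bookkeeping concerns you raise about the random $t_0$ and the block-sparse updates are handled exactly as you describe (apply the Key Lemma pathwise on $\X^n\times\U$ and then take expectations), so there is no gap.
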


The proof is the same as in the vanilla case of DP-SCO. 

\smallskip
\subsubsection{Record-Level Joint-DP}
Now we prove the record-level joint-DP part of Theorem~\ref{thm:main}.

The $(\epsl,\delta)$-DP guarantee for $\A_{\NSGD}$ follows from Lemma~\ref{lm:kll21}.
Now we bound the population loss.

Recall the notation $\og_t$ we defined in Subsection~\ref{subsec:without_DP}, which satisfies that $\|\og_t\|_2^2\le L^2$ and $\E[\og_t]=\nabla f(x^{(t)},u^{(t)})$.
Recall the noise $[\mathbf{0},b_t]$ added in $t$-th iteration.
For simplicity, we let $y^{(t)}=[x^{(t)},u^{(t)}]$, $y^*=[x^*,u^*]$ and $\tg_t=\og_t+[\mathbf{0},b_t]$.
By the convexity, we know
\begin{align*}
    f(y^{(t)})-f(y^*)
    \le & \langle \nabla f(y^{(t)}),y^{(t)}-y^*\rangle =  \langle \tg_t,y^{(t)}-y^*\rangle + \langle \nabla f(y^{[t]})-\tg_t,y^{(t)}-y^*\rangle.
\end{align*}
Note that $y^{(t+1)}=\Pi_{\X^m\times\U}\langle y^{(t)}-\eta\tg_t\rangle$.
Hence we know
\begin{align*}
    \langle \tg_t,y^{(t)}-y^*\rangle\le& \frac{1}{2\eta}(\|y^{(t)}-y^*\|_2^2-\|y^{(t+1)}-y^*\|_2^2) + \frac{\eta}{2}\|\tg_t\|^2_2.
\end{align*}
Taking expectations on both sides, one gets
\begin{align*}
    \E[f(y^{(t)})-y^*]\le& \frac{1}{2\eta}(\|y^{(t)}-y^*\|_2^2-\|y^{(t+1)}-y^*\|_2^2) +\frac{\eta}{2}\E[\|\tg_t\|_2^2].
\end{align*}
Summing the results over $t$, we get
\begin{align*}
\E[\phi_{\opt}(\A_{\NSGD})] &:= \E[f_\mathcal{S}(\overline{x}, \overline{u})] - f_{\mathcal{S}}(x^*, u^*) \leq \frac{R^2}{2\eta T} + \frac{\eta}{2T} \sum_{i=1}^T \E[ \|\tg_t \|_2^2 ] \\
& \leq \frac{R^2}{2\eta T} + \frac{\eta}{2} (L^2 + \ell \sigma^2) =  \frac{R^2}{2\eta T} + \frac{\eta L^2}{2} (1 + \frac{\ell T \log(1/\delta)}{\epsl^2 m^2 n^2}), 
\end{align*} 
where we recall that noise is only added to $\U \subseteq \mathbb{R}^\ell$. 
By the precondition that $\epsl\le O(1)$ and the parameter setting $T=\Theta(m^2n^2)$, we know $ \frac{\ell T \log(1/\delta)}{\epsl^2 m^2 n^2}=\Omega(1)$. 

Next, by Lemma~\ref{lem:uniform_stability}, \Cref{lem:stability_rSGD_jointDP} and the Lipschitz assumptions, $f_j(\A(\calS),z')-f_j(\A(\calS'),z')\le L\sup_{\mathcal{S} \simeq \mathcal{S}'} \E_{\A_{\rSGD}}[\delta_{\A_{\rSGD}}(\mathcal{S}, \mathcal{S}')]$, and hence we have 
\[
\E[\phi_{\gen}] \leq L \cdot \min(R, 4L (\sqrt{T} + \frac{T}{m n})  \eta) \leq \frac{8 L^2 T \eta}{mn} ,
\]
since we have chosen $T \geq m^2n^2$. 
Finally we have $\E[\phi_{\apprx}] \leq 0$. These bounds together imply that 
\begin{align*}
\E[\phi_{\risk}(\A_{\NSGD})] 
 \lesssim \frac{R^2}{\eta T} + \eta L^2 T \cdot \Big(\frac{1}{mn} + \frac{\ell \log(1/\delta)}{\epsl^2 m^2 n^2} \Big)  
\leq RL \Big(\frac{1}{\sqrt{mn}} + \frac{\sqrt{\ell \log(1/\delta)}}{\epsl mn} \Big) .
\end{align*}

\subsubsection{User-Level Joint-DP}

We can immediately use the above record-level joint-DP bound to obtain an
optimal user-level joint-DP bound by using a notion known as Group Privacy. 

\begin{lemma}[Group Privacy,\cite{dr14}] \label{lem:group_privacy}
If some algorithm $\A$ is $(\epsl,\delta)$-DP for any neighboring datasets $\calS,\calS'$ differing one sample, then $\A(\calS)$ and $\A(\calS')$ are $(k\epsl,ke^{k\epsl}\delta)$-indistinguishable when $\calS,\calS'$ differ by $k$ samples.
\end{lemma}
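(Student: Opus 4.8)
The plan is to prove this by the standard hybrid (telescoping) argument over a chain of datasets. Since $\calS$ and $\calS'$ differ in exactly $k$ samples, I would first construct a sequence $\calS = \calS_0, \calS_1, \ldots, \calS_k = \calS'$ in which $\calS_i$ is obtained from $\calS_{i-1}$ by editing a single one of the $k$ differing records (switching it from its value in $\calS$ to its value in $\calS'$). By construction, every consecutive pair $(\calS_{i-1}, \calS_i)$ differs in exactly one sample, so the hypothesis that $\A$ is $(\epsl,\delta)$-DP applies to each such pair, and all intermediate datasets have the same size so they are legitimate inputs.

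Next I would fix an arbitrary event $\mathcal{E}$ and telescope the one-step guarantees along the chain. From $\Pr[\A(\calS_{i-1})\in\mathcal{E}] \le e^{\epsl}\Pr[\A(\calS_i)\in\mathcal{E}] + \delta$ for each $i \in [k]$, a straightforward induction on $i$ yields
\[
\Pr[\A(\calS)\in\mathcal{E}] \;\le\; e^{k\epsl}\,\Pr[\A(\calS')\in\mathcal{E}] \;+\; \delta\sum_{i=0}^{k-1} e^{i\epsl}.
\]
The residual additive term is a geometric sum, $\sum_{i=0}^{k-1} e^{i\epsl} = \frac{e^{k\epsl}-1}{e^{\epsl}-1} \le k\,e^{(k-1)\epsl} \le k\,e^{k\epsl}$, which gives $\Pr[\A(\calS)\in\mathcal{E}] \le e^{k\epsl}\Pr[\A(\calS')\in\mathcal{E}] + k e^{k\epsl}\delta$. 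Running the identical telescoping in the reverse direction — using the other $(\epsl,\delta)$-indistinguishability inequality and reading the chain from $\calS'$ back to $\calS$ — produces the symmetric bound, so $\A(\calS)$ and $\A(\calS')$ are $(k\epsl, k e^{k\epsl}\delta)$-indistinguishable as claimed.

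There is no genuine obstacle here; this is a routine composition argument. The only points that need a line of care are (i) checking that a valid one-edit-at-a-time chain connecting $\calS$ and $\calS'$ exists, which holds precisely because "differing in $k$ samples" means the datasets agree outside $k$ coordinates that can be flipped one at a time, and (ii) bounding the geometric series cleanly enough to reach the stated (intentionally lossy) constant $k e^{k\epsl}$ rather than the slightly sharper $k e^{(k-1)\epsl}$. In the intended application, $k$ will be the number of records $m/r$ held by a single user, so this lemma is exactly the device that upgrades the record-level $(\epsl,\delta)$ guarantee of $\A_{\NSGD}$ to a user-level joint-DP guarantee after re-parametrizing $\epsl$ and $\delta$.
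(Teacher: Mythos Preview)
Your argument is correct and is precisely the standard proof of group privacy. Note, however, that the paper does not give its own proof of this lemma: it is stated with a citation to \cite{dr14} and used as a black box. The hybrid chain plus geometric-sum telescoping you wrote is exactly the argument in that reference, so there is nothing to compare; your write-up simply supplies the omitted details.
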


In particular, since each user holds $m/r$ records of its owner's dataset, if we want to satisfy user-level $(\epsl, \delta)$-joint-DP, then we simply apply the record-level $(\epsl', \delta')$-joint-DP algorithm above with $\epsl' = \epsl r/m$ and $\delta' = O(\delta r/m)$ would give the correct bound of $RL \cdot (\frac{1}{\sqrt{mn}} + \frac{\sqrt{\ell \log(m/ r\delta)}}{\epsl n r})$ by directly applying Group Privacy in \Cref{lem:group_privacy}. 

Similarly, if we consider a change of a subset of each user's record, we can also get the optimal DP guarantee there. We do not state this more general result explicitly in this paper.

\subsection{Discussions}
\label{subsec:discussion}
We compare our results under joint-DP to other natural settings in the following table, in terms of excess population loss bounds at the record level. Details for deriving these bounds are delayed to the Appendix.
\begin{center}
\begin{tabular}{ |c| c| c| }
\hline

Owners learn individually & Collaboration without DP & Collaboration with full-DP \\ 
\hline
$O(L (D_\X + D_\U)/\sqrt{m})$ & $O\Big( L \cdot \sqrt{\frac{(n D_{\X}^2 + D_{\U}^2)}{mn}}\Big)$ & $O\Big(RL\Big(\frac{1}{\sqrt{mn}}+\frac{\sqrt{(nk+\ell)\log(1/\delta)}}{\epsl mn} \Big)\Big)$ \\  
 \hline
\end{tabular}
\end{center}

The bound under joint-DP always saves a term $O(RL\frac{\sqrt{nk}}{\epsl nm})$ compared to the bound under full-DP, which suggests the advantage of the joint-DP model.
Recall that $R = \sqrt{n D_{\X}^2 + D_{\U}^2}$.

Note that ``owners learn individually''  does not guarantee full-DP but joint-DP with $\epsl= 0$.
If we compare the bound from ``owners learn individually'': $L (D_\X + D_\U)/\sqrt{m}$, and the bound from joint-DP: $L\sqrt{nD_{\X}^2+D_{\U}^2}(\frac{1}{\sqrt{mn}}+\frac{\sqrt{\ell\log(1/\delta)}}{\epsl mn})$,
we can see that there are regimes that the second bound is smaller, which suggests owners to collaborate.
In particular, when $D_{\U}\gg D_{\X}$ and $\frac{\sqrt{\ell\log(1/\delta)}}{\epsl mn}$ is not much bigger than $\frac{1}{\sqrt{mn}}$, joint-DP seems to be preferred, and the larger $n$, the better the bound is.
This is consistent with our experiment observations in the next section.

\section{Experiment Evaluations}
\label{sec:experiment}

In this section, we empirically study the role of collaboration with \textsc{user-level} joint-DP, as well as how it compares to collaboration with full-DP and per-silo training. Although the real-world problems we study in the experiments may not be convex, we find that the empirical results still align with the insights from our theoretical analysis in many ways.

\subsection{Experimental Setup}

\paragraph{Datasets.}

We consider the multi-class classification problem over two public datasets: MNIST~\cite{l98} and
FashionMNIST~\cite{x17}. Each dataset has 10 classes. We use a $N=10,000$ subset of the original training dataset and evenly distribute the training data among $n$ owners.
To simulate real-world scenarios where data distribution among owners is non-IID, we ensure that each owner only owns examples from a maximum of 8 out of the 10 classes. The test data is also divided among the owners in the same manner as the training data, to reflect the non-IID distribution. Note that for this scenario in which the owner's distributions are significantly different, our goal is not to achieve state-of-the-art accuracy. Instead, we aim to compare various training paradigms for obtaining personalization under data heterogeneity and privacy requirements.

\paragraph{Models.} The model we use is a two-layer Convolutional Neural Network (CNN), followed by two parallel fully connected (FC) layers. The final output is obtained by averaging the output from these two FC layers. Unless otherwise specified, in our collaboration with joint-DP experiments, only one FC layer is individually trained by each owner, while the other parameters are shared among owners. We also examined various options for individually trained parameters in Section~\ref{sec:exp_private_choice}.

\paragraph{Baselines.} We compare the collaboration with joint-DP with three baselines:
\begin{itemize}
\item \textbf{Per-silo training} is a baseline where each owner trains a model using only their own data. This approach allows each owner to have a personalized model that is trained specifically on their own data and potentially better suited to their needs. The training process does not require differential privacy as each owner trains a model independently from others, and the private data never leaves the owner's site\footnote{We also assume that the owner would not share the trained models with others.}.
\item \textbf{Collaboration without DP} is a baseline where \textit{all} parameters of the model are collaboratively trained among all owners (in the federated learning fashion), but \textit{without} DP noise added. Our experiments use FedAvg~\cite{medsa17}, and adjust $E$ (the local iterations) and $T$ (the global epochs) for different $n$, according to Table~\ref{tab:fl_config} in Appendix~\ref{sec:app_exp}.
\item \textbf{Collaboration with full-DP} is a baseline where \textit{all} parameters of the model are collaboratively trained among all owners, \textit{with} central-DP noise added. The federated training uses the same setup as Collaboration without DP. The only difference is Collaboration with full-DP applies DP-SGD~\cite{ACG+16} for each global model update.
\end{itemize}

\begin{table}[t]
\setlength{\tabcolsep}{4.5pt}
\begin{center}
\caption{Test accuracy of MNIST and FashionMNIST across 5 different runs for per-silo training ($\epsl=\infty$), collaboration without DP, with full-DP, and with joint-DP. We vary  $n$, the number of owners. The best result among per-silo training, collaboration with full-DP, and collaboration with joint-DP is highlighted in \textbf{boldface}.  When $n > 128$, collaboration with joint-DP achieves better accuracy than per-silo training and collaboration with full-DP.}
\label{tab:main_table}
\begin{small}
\begin{sc}
\begin{tabular}{l|c|c|cccccc}
\toprule
\multirow{2}{*}{$n$} & \multirow{2}{*}{ $m$} &  \textbf{Per-silo} & \multicolumn{3}{c}{\textbf{Collaboration}} \\
&  &  \textbf{Training}& without DP & w/ full-DP ($\epsl=1.0$)  & w/ joint-DP ($\epsl=1.0$) \\ 
\midrule
\multicolumn{6}{c}{Dataset: MNIST} \\
\midrule
$4$               &  $2500$      & $\mathbf{0.9445} \pm 0.0069$     & ${{0.9602}} \pm 0.0017$  & $0.4444 \pm 0.0844$   & ${0.4658} \pm 0.0307$                     \\
$16$              &  $625$       & $\mathbf{ 0.9457} \pm 0.0038$    & ${0.9468} \pm 0.0014$   & $0.2926 \pm 0.0239$   &${0.3207} \pm 0.0690$                      \\
$64$              &  $156.25$    & $\mathbf{ 0.9209} \pm 0.0010$    & ${{0.9402}} \pm 0.0037$  & $0.6982 \pm 0.0314$    &${0.7651} \pm 0.0401$                     \\
$128$             &  $78.13$     & $\mathbf{0.8639} \pm 0.0010$     & ${{0.9376}} \pm 0.0043$  & $0.8081 \pm 0.0240$   &${0.8169} \pm 0.0160$                    \\
$256$             &  $39.06$     & $0.7678 \pm 0.0015$              & ${{0.9196}} \pm 0.0028$  & $0.7399 \pm 0.0324$   &$\mathbf{{0.7835}} \pm 0.0136$                  \\
$512$             &  $19.53$     & $0.7240 \pm 0.0001$              & ${{0.9326}} \pm 0.0093$ &   $0.6023 \pm 0.0621$  &$\mathbf{{0.7423}} \pm  0.0832$          \\
\midrule
\multicolumn{6}{c}{Dataset: FashionMNIST} \\
\midrule
$4$      & $2500$   & $\mathbf{0.8888} \pm 0.0253$       & $0.8759 \pm 0.0042$   &  $0.4560 \pm 0.0142$    & $0.4533 \pm 0.0121$ \\
$16$     & $625$    & $\mathbf{0.8575} \pm 0.0135$       & $0.8301 \pm 0.0078$   &  $0.4044 \pm 0.0473$    & $0.4205 \pm 0.0539$    \\
$64$     & $156.25$ &  $\mathbf{0.8498} \pm 0.0041$      & $0.8232 \pm 0.0088$   &  $0.5354 \pm 0.0249$    & $0.5564 \pm 0.0207$    \\
$128$    & $78.13$  &   $\mathbf{0.7506} \pm 0.0017$     & $0.7676 \pm 0.0123$   &  $0.6593 \pm 0.0137$    & $0.6719 \pm 0.0272$    \\
$256$    & $39.06$  &   $0.6489 \pm 0.0004$     & $0.8010 \pm 0.0018$   &  $0.6749 \pm 0.0160$    & $\mathbf{0.6908} \pm 0.0145$    \\
$512$    & $19.53$  &   $0.6355 \pm 0.0007$     & $0.8242 \pm 0.0069$   & $0.6484 \pm 0.0227$  & $\mathbf{0.6667}  \pm 0.0062$ \\
\bottomrule
\end{tabular}
\end{sc}
\end{small}
\end{center}
\end{table}

\begin{figure*}[t]
    \centering
    \includegraphics[width=0.32\linewidth]{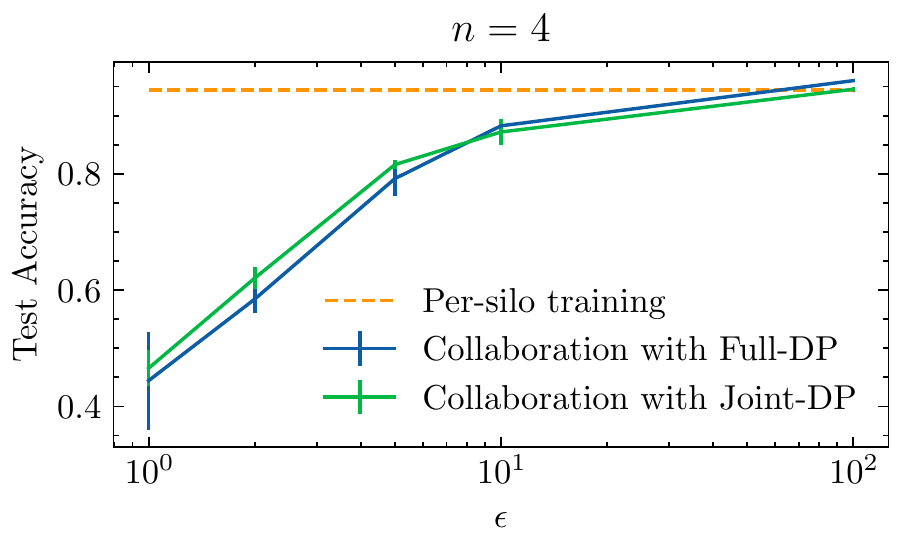}
    \includegraphics[width=0.32\linewidth]{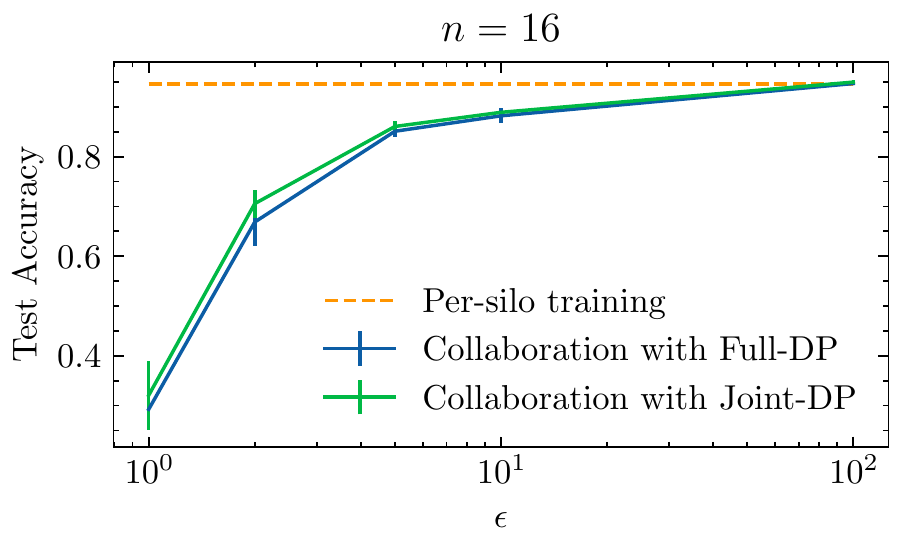}
    \includegraphics[width=0.32\linewidth]{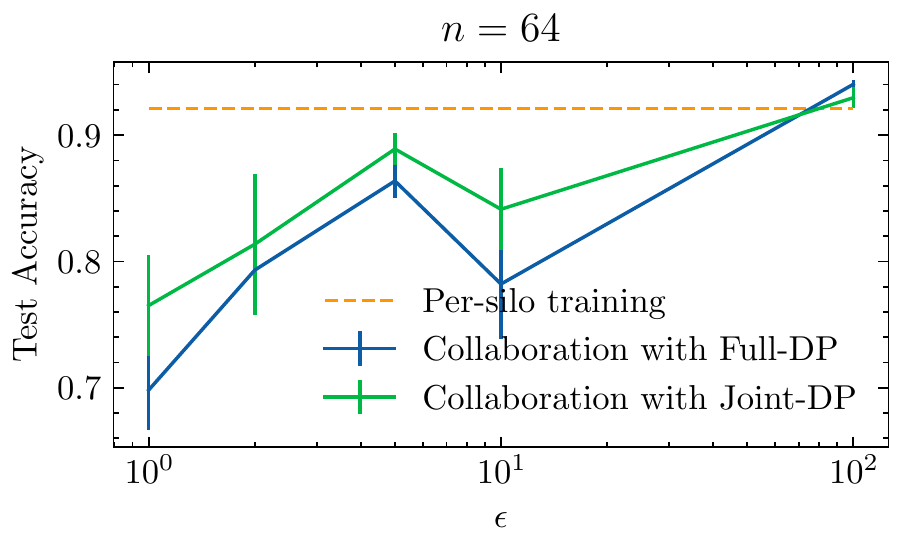}
    
    \includegraphics[width=0.32\linewidth]{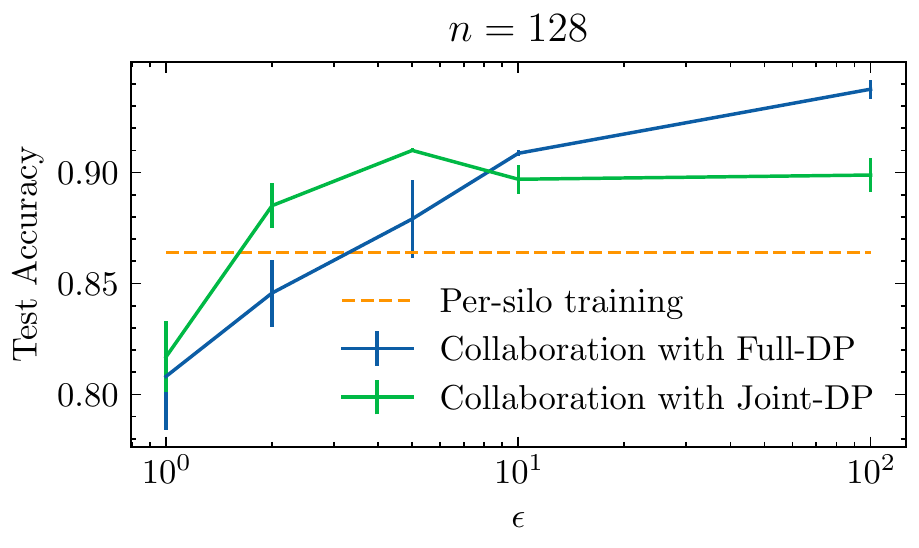}
    \includegraphics[width=0.32\linewidth]{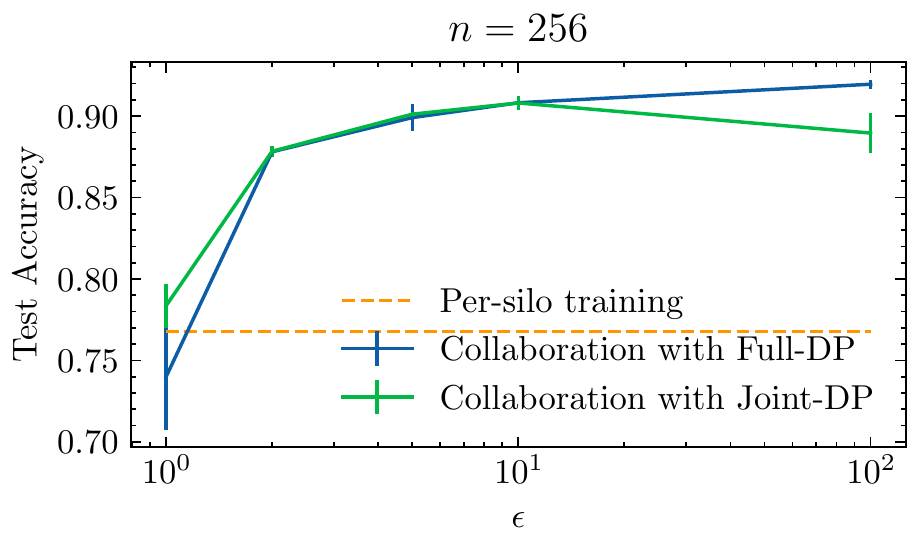}
    \includegraphics[width=0.32\linewidth]{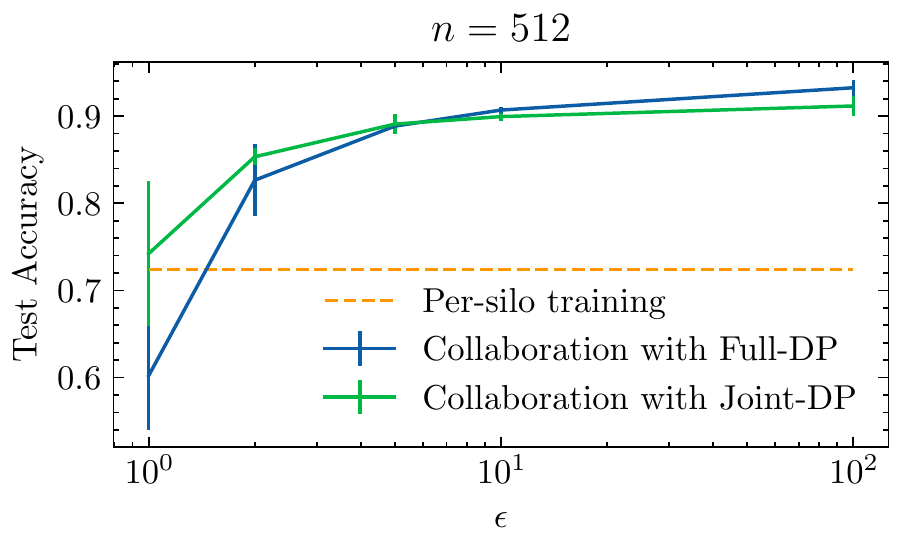}
    \vskip -0.1in
    \caption{Test accuracy of MNIST  for collaboration with full-DP and collaboration with joint-DP with different $\epsl$'s. We also report per-silo training results (the dashed lines) for reference.}
    \label{fig:MNIST_vary_n}
\end{figure*}

Note that although Collaboration without DP is included as a baseline in our experiment, it is not a standard practice in Federated Learning due to its lack of privacy protection. However, its performance helps us understand the utility upper bound for collaboration paradigms.

\subsubsection{Per-silo Training v.s. Collaboration without DP} Comparing the test accuracy of collaboration without DP to that of per-silo training allows us to understand the maximum performance potential of collaborative training. This can also provide insight into how factors such as $n$ (the number of owners) affect the performance. 

Our evaluation focuses on the setting where $m=r$ (i.e., each user owns 1 record), a crucial special case for the joint-DP formulation
\footnote{We provide results for $m>r$ in the Appendix.}.   We vary $n$, the number of owners in $\{4, 16, 64, 128, 256, 512\}$, and compare the test accuracy of collaboration without DP to that of per-silo training. As shown in Table~\ref{tab:main_table}, when $n \leq 64$, collaboration without DP achieves similar performance to per-silo training. This is because each owner already has enough data to train their own model, and therefore, collaboration does not provide a significant improvement in utility. However, as $n$ increases, collaboration without DP starts to perform much better than per-silo training, as it utilizes the combined data of multiple owners to improve the overall accuracy of the model.

\begin{table}[t]
\setlength{\tabcolsep}{12pt}
\begin{center}
\caption{Test accuracy of MNIST for collaboration with joint-DP under different configurations for individually trained parameters. This evaluation uses $n=128, \epsl=1.0$. The best result is highlighted in \textbf{boldface}.}
\label{tab:priv_config}
\begin{small}
\begin{sc}
\begin{tabular}{ccc|c|c}
\toprule
\multicolumn{3}{c|}{\bf Individually Trained?} & {\bf \# Individually} & {\bf Test Accuracy} \\
{Conv1} & {Conv2} & {FC1}   & {\bf  Trained Params}\\
\midrule
No & No & No &  $0$ & $0.7399 \pm 0.0324 $\\
Yes & No & No & $416$ & $0.7240 \pm 0.0130$  \\
No & Yes & No & $12,832$ & $0.7090  \pm 0.0014$ \\
Yes & Yes & No & $13,248$ & $0.1863 \pm 0.0329$\\
No & No & Yes &  $17,248$ & $\mathbf{0.7835} \pm 0.0136 $\\
Yes & No & Yes & $17,664$ & $0.6606 \pm 0.0259 $ \\
No & Yes & Yes & $30,080$ &   $0.7291 \pm 0.0320$ 
\\
Yes & Yes & Yes & $30,496$ & $0.1727 \pm 0.0419$ 
\\
\bottomrule
\end{tabular}
\end{sc}
\end{small}
\end{center}
\end{table}

\subsubsection{$(\epsl, \delta)$-DP models}

We then investigate the real-world setting where collaboration models are trained with differential privacy. We vary the privacy budget $\epsl \in \{1, 2, 5, 10, 100 \}$ and report the corresponding test accuracy for both joint-DP and full-DP models. We set $\delta = 1/N = 10^{-4}$ and the $\ell_2$ clipping norm to  15.

As shown in Table~\ref{tab:main_table} and Figure~\ref{fig:MNIST_vary_n}, when the privacy budget $\epsl \leq 10$, the performance of joint-DP models is consistently better than full-DP models. This improvement is due to the fact that a fraction of parameters in the joint-DP model is retained by each owner and therefore does not require the addition of DP noise. As the privacy budget $\epsl$ increases, full-DP models begin to achieve a higher accuracy because the added noise becomes smaller and no longer acts as a performance bottleneck; With noise becoming less of an issue, more parameters can be collaboratively trained among the owners in the model, which results in higher accuracy.

More interestingly, the improvement introduced by joint-DP in low-$\epsl$ regime is larger for larger $n$ (number of owners).
This observation is consistent with our theoretical results, as mentioned in Section~\ref{subsec:discussion}.
As the number of shared parameters is larger, hence $LD_{\U}/\sqrt{m}$, $LD_{\U}(\frac{1}{\sqrt{mn}}+\frac{\sqrt{(nk+\ell)\log(1/\delta)}}{\epsl mn})$ and $LD_{\U}(\frac{1}{\sqrt{mn}}+\frac{\sqrt{\ell \log(1/\delta)}}{\epsl mn})$ are the dominated terms when owners learn individually (i.e., per-silo training), collaboration with full-DP and collaboration under joint-DP respectively.
It is obviously $LD_{\U}/\sqrt{m}$ is independent of $n$, while the bounds of full-DP and joint-DP become smaller when $n$ gets larger, and the bound of joint-DP decreases faster.

\subsubsection{Choice of individually trained parameters}
\label{sec:exp_private_choice}

We also investigated various options for individually trained parameters. As previously mentioned, the model we use consists of two convolutional layers (Conv1 and Conv2) and two parallel fully connected layers (FC1 and FC2). We experiment with different combinations of individually trained or collaboratively trained for these layers. Notably, to differentiate from the per-silo training scenario where all parameters are individually trained, we consistently train FC2 collaboratively.

As shown in Table~\ref{tab:priv_config}, the test accuracy is poor when individually trained parameters are applied before collaboratively trained (i.e., shared) parameters. This is due to the fact that when the input to shared layers comes from the output of individually trained layers, the input distribution becomes highly non-IID, making it difficult to learn effective shared parameters.

\section{Conclusion}
In this paper, we initiate the study of stochastic convex optimization under joint differential privacy. Our theoretical results together with empirical studies provide very interesting insights into understanding how joint-DP models can provide better utility-privacy tradeoff compared to other baselines when learning across data owners with privacy constraints.
Unraveling the potential for improved bounds or illustrating lower bounds presents an engaging challenge.
We also believe that the general direction of studying machine learning under joint differential privacy is well-motivated and exciting.

\newpage
\bibliography{bib.bib}
\bibliographystyle{plain}

\newpage
\appendix
\section{Comparisons to Other Settings}
\label{sec:other}


\subsection{Owners Learn Individually}
Each owner minimizes their own $f_j(x_j,u)$ obviously satisfying the joint-DP guarantee in the billboard model, as the server does not output anything. The optimal excess population loss for each owner is 
\[
O \Big(L \sqrt{\frac{(D_{\X}^2 + D_{\U}^2)}{m}} \Big) \approx O \Big( L(D_{\X} + D_{\U})/\sqrt{m} \Big) .
\]
This is the optimal bound for non-private stochastic convex optimization.
The problem with this solution is that $D_{\U}$ can be much much larger than $D_{\X}$ when $\dim(u) \gg \dim(x)$, which is typically the case in practical applications. Optimistically, we should hope that denominator in the term involving $D_{\U}$  is  $\sqrt{nm}$, since all $n$ owners share the same parameter $u$, while the term involving $D_{\X}$ is still roughly $L D_{\X} / \sqrt{m}$.


\subsection{Collaboration Without DP}
\label{subsec:without_DP}
In this subsection, suppose we allow users to collaborate without worrying about privacy issues. The best possible excess population loss in this setting is always a lower bound for the best possible excess population loss in the joint-DP setting. 
For this non-private setting, we can use stochastic gradient descent as follows. 
Run for a total of $T$ steps, where $T$ will be chosen later; in each step $t \in [T]$, randomly sample an owner $j \in [n]$ and take one more fresh sample $z_j^{(t)} \in \mathcal{S}_j$ from her dataset to update parameters
\[
(x_j^{(t+1)}, u^{(t+1)}) = (x_j^{(t)}, u^{(t)}) - \eta_t \cdot \nabla h(x_j^{(t)}, u^{(t)}, z_j^{(t)}) . 
\]
Let $g_t = \nabla h(x_j^{(t)}, u^{(t)}, z_j^{(t)})$ be the gradient in step $t$.
Recall the definition of the population function of all users (Equation \eqref{eq:objective_function}).
We can concatenate $g_t$ with $(n-1)$ zero vectors $``\mathbf{0}''\in\mathbb{R}^k$ on the coordinates corresponding to owner $i$'s parameters $x_i$ for $i\ne j$, and get the concatenated vector, denoted by $\overline{g}_t$,  as an unbiased estimation of the subgradient for the excess population loss $\nabla f(x,u)$.
Then $\|\og_t\|_2^2 \leq L^2$ due to Lipschitzness, and the stochastic gradient is unbiased, i.e., $\E[\og_t]=\nabla f(x^{(t)},u^{(t)})$. 
The domain has size $\sqrt{n D_{\X}^2 + D_{\U}^2}$. 
Note that we can run the above algorithm for roughly $T \approx mn$ steps before running out of fresh samples from any of the owners. 
Let $\hat{x} = \frac{1}{T} \sum_{i=1}^T x^{(i)}$ and $\hat{u} = \frac{1}{T} \sum_{i=1}^T u^{(i)}$ be the average of the solutions.
The classic analysis of SGD then gives the following bound on the excess population loss
\begin{align*}
    f(\hat{x}, \hat{u}) \leq L \cdot \sqrt{\frac{(n D_{\X}^2 + D_{\U}^2)}{mn}} .
\end{align*}

This is an essential improvement to the bound when users learn individually, and matches our intuition that the term involving $D_{\U}$ should be divided by $\sqrt{mn}$. As there is no privacy guarantee in this setting, the above bound serves as a lower bound for the joint-DP setting that we primarily study in this paper. We analyze the excess population loss under joint-DP and full-DP in the following sections.


\subsection{Collaboration with Full DP}
As a comparison, we discuss the setting of full-DP: that is each owner will also share their personalized parameters $\{x_j\}_{j\in[m]}$ to other owners in the system so privacy should be taken into account in learning each owner's personalized parameters. 

\paragraph{Record Level.}
We consider the record level first.
We can add Gaussian noise $[B_t,b_t]$ to the (sub)gradient in $\A_{\rSGD}$ to obtain full DP, where $B_t\sim\mathcal{N}(0,\sigma^2I_{\X^n})$, where $\sigma=\frac{L^2T\log(1/\delta)}{\epsl^2 m^2n^2}$ and $T=\Theta(m^2n^2)$ are the same.

The privacy guarantee and error analysis are similar to the previous ones for joint DP. 

Analyze the generalization error. Denote $\widetilde{g}_t = \og_t + \mathcal{N}(0,\sigma^2 I_{nk+\ell})$, then we have
\begin{align*}
    \E[\|\widetilde{g}_t\|_2^2] =& L^2 + (nk+\ell) \sigma^2 = L^2 + \frac{(nk+\ell) L^2 \log(1/\delta)}{\epsl^2} \lesssim \frac{(nk+\ell) L^2 \log(1/\delta)}{\epsl^2}.
\end{align*}

So the convergence error would be 
\begin{align*}
\phi_{\opt}
& \lesssim \sqrt{\frac{(nk+\ell) L^2 \log(1/\delta)}{\epsl^2}} \cdot \sqrt{\frac{n D_\X^2 + D_\U^2}{m^2n^2}} = LR \cdot \frac{\sqrt{(nk+\ell)\log(1/\delta)}}{\epsl nm} . 
\end{align*}

Similarly, applying the uniform stability arguments can bound the generalization error by $O(RL/\sqrt{mn})$.
Hence the population loss is bounded as follows:
\begin{align*}
    \phi_{\risk}\lesssim RL \Big(\frac{1}{\sqrt{mn}}+\frac{\sqrt{(nk+\ell)\log(1/\delta)}}{\epsl mn} \Big).
\end{align*}

\paragraph{User Level.} As for the user-level DP, we can also apply group privacy and get a bound $O\big(RL(\frac{1}{\sqrt{mn}}+\frac{\sqrt{(nk+\ell)\log(m/\delta)}}{\epsl n})\big)$.

\section{Smooth functions}
This section compares user-level joint-DP SCO bounds for smooth function, whose proof basically based on \cite{LSA+21}.

We assume the loss function $h(\cdot, \cdot ,z)$ is $H$-smooth over $\X \times \U$.
The result is stated formally below:

\begin{theorem}[\cite{LSA+21}]
\label{thm:main_smooth}
For any $0<\delta<1/\poly(n)$ and $10\ge \epsilon>0$, setting $T=O(\frac{n^2}{\log(1/\delta)})$, Algorithm~\ref{alg:NSGD_SCO_jointDP_smooth} is an $(\epsilon,\delta)$ user-level joint-DP algorithm.
Moreover, if all data are drawn i.i.d. from the distributions, and the functions smoothness $H=O(\frac{G}{R}\frac{n\ell\log(\ell n)\sqrt{\log(1/\delta)\log m}}{\epsilon\sqrt{mr}})$, then it can achieve excess population loss
\[
\Tilde{O}\Big(RL \cdot \Big(\frac{1}{\sqrt{mn}} + \frac{\ell }{\epsl n \sqrt{mr}} \Big) \Big) ,
\]
where $R=\sqrt{n D_{\X}^2 + D_{\U}^2}$.
\end{theorem}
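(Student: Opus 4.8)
The plan is to port the smooth-SCO analysis of \cite{LSA+21} to our data-owner/user factorization, using the billboard model to confine the privacy burden to the shared parameter $u$: each owner keeps and updates its personalized block $x_j$ locally and never reveals it, so it suffices to privatize the sequence of broadcasted updates to $u$ at the user level. Algorithm~\ref{alg:NSGD_SCO_jointDP_smooth} is a noisy projected SGD on $(x,u)$ run for $T=O(n^2/\log(1/\delta))$ rounds; in round $t$ we subsample an owner $j_t\sim[n]$, each of its $r$ users forms a stochastic gradient of $h$ in the $u$-block from its $m/r$ local samples, these $r$ per-user gradients are clipped to a radius $\rho$ and averaged, the server adds $\mathcal{N}(0,\sigma^2 I_{\U})$ to the average and broadcasts it, and every owner performs a projected step. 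The quantitative heart of the argument is that, because each per-user gradient is an average of $m/r$ i.i.d.\ samples, by $L$-Lipschitzness it lies within $\tilde O(L\sqrt{r/m})$ of the population gradient $\nabla_u f_{j_t}(x_{j_t}^{(t)},u^{(t)})$ with high probability; to enforce this \emph{uniformly} over a net of $\U\subset\mathbb{R}^\ell$ one inflates the radius by a $\tilde O(\sqrt{\ell})$ union-bound factor to $\rho=\tilde\Theta(L\sqrt{\ell r/m})$, under which clipping is inactive with high probability, so the privatized average is an essentially unbiased estimate of the aggregate gradient, and replacing one user's data perturbs the clipped average (hence the mechanism's input) by only $O(\rho/r)=\tilde O(L\sqrt{\ell/(mr)})$ in $\ell_2$-norm.

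For the privacy guarantee I would (i) set $\sigma$ by the Gaussian mechanism against the per-round user-level sensitivity $\tilde O(L\sqrt{\ell/(mr)})$, (ii) use privacy amplification by owner-subsampling (a given user's owner is hit with probability $1/n$ in each round), and (iii) compose over the $T$ rounds via tCDP exactly as in \Cref{lm:kll21}; this forces $\sigma=\tilde\Theta\big(\tfrac{L\sqrt{\ell}}{\sqrt{mr}}\cdot\tfrac{\sqrt{T\log(1/\delta)}}{\epsl n}\big)$ to reach $(\epsl,\delta)$-DP, and the billboard structure lifts this to $(\epsl,\delta)$ joint-DP exactly as for $\A_{\NSGD}$. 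The smoothness hypothesis on $H$ in the theorem statement is precisely what makes the ``clipping is inactive'' claim hold along the whole trajectory rather than only pointwise: smoothness transfers the coarse net bound over $\U$ to a uniform control of the per-user gradient deviation at the visited iterates, and the stated ceiling on $H$ is the largest curvature for which the induced drift stays below $\rho$ while retaining the $\sqrt{\log(1/\delta)\log m}$ concentration slack and the $\log(\ell n)$ net factor. Carrying the bias incurred on the low-probability clipping event as a negligible additive term then completes the privacy-side bookkeeping.

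For utility I reuse the decomposition $\phi_{\risk}\le\phi_{\gen}+\phi_{\opt}+\phi_{\apprx}$ with $\E[\phi_{\apprx}]\le 0$. The term $\phi_{\opt}$ is the optimization error of noisy SGD: the convexity-plus-telescoping estimate from the record-level proof gives $\E[\phi_{\opt}]\lesssim \tfrac{R^2}{\eta T}+\eta\big(L^2+\ell\sigma^2\big)$, and substituting $T=O(n^2/\log(1/\delta))$ and the above $\sigma$ (so that $\ell\sigma^2=\tilde\Theta(\ell^2 L^2/(mr\epsl^2))$) and optimizing over $\eta$ produces the dimension-dependent term $\tilde O\big(\tfrac{RL\,\ell}{\epsl n\sqrt{mr}}\big)$; the linear (rather than $\sqrt{\ell}$) dependence is exactly the price of the clipped-mean aggregation and matches \cite{LSA+21}. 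The term $\phi_{\gen}$ is handled by \Cref{lem:uniform_stability} together with a uniform-stability bound identical in form to \Cref{lem:stability_rSGD_jointDP} — the injected noise is data-independent and does not affect argument stability — yielding $\E[\phi_{\gen}]=O(RL/\sqrt{mn})$, which supplies the first term. Summing the three pieces gives $\tilde O\big(RL(\tfrac{1}{\sqrt{mn}}+\tfrac{\ell}{\epsl n\sqrt{mr}})\big)$.

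I expect the main obstacle to be the coupled privacy/bias step: establishing that the per-user averaged gradient stays within the radius $\rho=\tilde\Theta(L\sqrt{\ell r/m})$ at \emph{every} iterate visited by the algorithm, so that the clipped-mean mechanism can be analyzed as noise added to the true aggregate gradient. This requires a uniform concentration argument — a smoothness-refined net over $\U$, or a trajectory-adapted martingale bound — whose parameters must be tuned to reproduce exactly the stated constraint on $H$ together with the $\log(\ell n)$ and $\sqrt{\log m}$ factors; matching this bookkeeping to \cite{LSA+21} inside our owner/user-factored model is the delicate part, whereas the SGD optimization and the stability estimate are routine.
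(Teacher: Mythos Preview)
Your plan departs from the paper's argument in two places, and both are genuine gaps rather than alternative routes.

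\textbf{The aggregation mechanism.} You say the $r$ per-user gradients are ``clipped to a radius $\rho=\tilde\Theta(L\sqrt{\ell r/m})$'' and that clipping is then ``inactive with high probability.'' But clipped around which center? The per-user gradients concentrate within $\tau=\tilde O(L\sqrt{\ell r/m})$ of the \emph{unknown} population gradient $\nabla f_{j_t}$, whose norm can be as large as $L\gg\rho$; clipping to a ball of radius $\rho$ about the origin would therefore be always active and wipe out the signal. The paper does not use a one-stage clip-and-noise step: it calls the private mean estimator $\ALG$ of \cite{LSA+21} (\Cref{thm:mean_estimation}), a two-stage procedure that first privately locates a center for the $(\tau,\gamma)$-concentrated inputs and only then clips around that center and adds noise. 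This is exactly what yields user-level sensitivity $O(\tau/r)$ while keeping the bias negligible; your description does not supply an equivalent.

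\textbf{The utility argument.} More structurally, the combination ``second-moment SGD bound for $\phi_{\opt}$ $+$ \Cref{lem:stability_rSGD_jointDP} for $\phi_{\gen}$'' cannot produce the $RL/\sqrt{mn}$ term with the prescribed $T=O(n^2/\log(1/\delta))$. Owner subsampling alone forces the gradient second moment to be $\Theta(L^2)$, so after optimizing $\eta$ your $\phi_{\opt}$ bound retains a residual $RL/\sqrt{T}\asymp RL/n$; and \Cref{lem:stability_rSGD_jointDP} (which is for $\A_{\rSGD}$, not Algorithm~\ref{alg:NSGD_SCO_jointDP_smooth}) carries the non-smooth $2L\eta\sqrt{T}$ term, so the analogous stability bound for Algorithm~\ref{alg:NSGD_SCO_jointDP_smooth} gives $\phi_{\gen}\asymp L^2\eta\sqrt{T}=L^2\eta n$. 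Balancing all terms yields $RL/\sqrt{n}$, not $RL/\sqrt{mn}$. The paper's route is different on both halves: for optimization it invokes the \emph{variance}-based stochastic mirror descent bound for smooth objectives (\Cref{lm:S-MD}), where only the estimator variance $\tilde O(\ell\tau^2\log(1/\delta)/(r^2\epsl^2))$ from \Cref{thm:mean_estimation} enters; for generalization it does not use uniform stability at all but defers to Phased ERM with localization as in \cite{LSA+21}. This is also why your reading of the ceiling on $H$ is off: smoothness enters \Cref{lm:concentration_empirical_gradient} only logarithmically, and the stated constraint on $H$ is precisely what makes the $HR^2/T$ term of \Cref{lm:S-MD} match the privacy term, not what keeps clipping inactive.
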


Compared to Theorem~\ref{thm:main_informal}, the second term in Theorem~\ref{thm:main_smooth} has a better dependence on $m/r$, that is when $m/r$ becomes larger the loss will become smaller, but a worse dependence on the dimension $\ell$.
The proof basically follows from \cite{LSA+21}.

\cite{LSA+21} proposed an user-level joint-DP algorithm for mean estimation when the data is uniformly concentrated, which is defined below:

\begin{definition}[$(\tau,\gamma)$-concentrated]
\label{def:uniformly_concentrated}
A random sample set $\{X_i\}_{i\in[n]}$ is $(\tau,\gamma)$-concentrated if there exists $x_0$ such that with probability at least $1-\gamma$,
\begin{align*}
    \max_{i\in[n]}\|X_i-x_0\|\le \tau.
\end{align*}
\end{definition}

For uniformly concentrated data, \cite{LSA+21} proposed a private algorithm to estimate the mean.

\begin{theorem}[\cite{LSA+21}]
\label{thm:mean_estimation}
    Consider sample set $\{X_i\}_{i\in[n]}$ which is $(\tau,\gamma)$-concentrated.
    There is an $(\epsilon,\delta)$-DP estimator $\ALG(\{X_i\}_{i\in[n]},\epsilon,\delta)$ such that $\ALG(\{X_i\}_{i\in[n]},\epsilon,\delta)\sim_{\beta}\ALG'(\{X_i\}_{i\in[n]},\epsilon,\delta)$ where 
    \begin{align*}
        &~\E\ALG'(\{X_i\}_{i\in[n]},\epsilon,\delta)=\frac{1}{n}\sum_{i\in[n]}X_i, \\
        &~\Var\Big(\ALG(\{X_i\}_{i\in[n]},\epsilon,\delta)\mid \{X_i\}_{i\in[n]}\Big)\lesssim\frac{\ell\tau^2\log(\ell n/\gamma)\log(1/\delta)}{n^2\epsilon^2},
    \end{align*}
    where $\beta=\min\{1,2\gamma+\frac{\ell^2R\sqrt{\log(dn/\gamma)}}{\tau}\exp(-n\epsilon/24\sqrt{\ell\log(1/\delta)})\}$.
\end{theorem}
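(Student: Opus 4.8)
The plan is to instantiate, and then carefully account for, the private mean estimation procedure of \cite{LSA+21}, which has two stages: a coarse private localization step followed by a clip-and-noise step. First I would invoke $(\tau,\gamma)$-concentration: with probability at least $1-\gamma$ over the sample there is an (unknown) $x_0$ with $\max_{i\in[n]}\|X_i-x_0\|\le\tau$. Conditioning on this event, I would run a differentially private localization routine with privacy budget $(\epsilon/2,\delta/2)$ --- concretely, a stable/private histogram over a grid of $\mathbb{R}^\ell$ with cells of side length $\Theta(\tau)$, i.e.\ a private ``heavy cell'' finder --- to obtain a coarse center $\hat x_0$ satisfying $\|\hat x_0-x_0\|=O\big(\tau\sqrt{\log(\ell n/\gamma)}\big)$. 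This is where the exponential term in $\beta$ appears: the private threshold test that declares a cell ``heavy'' must separate a bin of count $\Omega(n)$ from additive noise of magnitude $\Theta\big(\sqrt{\ell\log(1/\delta)}/\epsilon\big)$, so it fails with probability $\exp\big(-\Omega(n\epsilon/\sqrt{\ell\log(1/\delta)})\big)$ per relevant cell; a union bound over the $O\big(\ell^2 R\sqrt{\log(dn/\gamma)}/\tau\big)$ cells lying within distance $O(R)$ of the origin gives the stated form of $\beta$ (here $d=\ell$ is the ambient dimension and $R$ the domain radius).

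In the second stage, I would clip each $X_i$ to the Euclidean ball $B(\hat x_0,R')$ with $R'=\Theta\big(\tau\sqrt{\log(\ell n/\gamma)}\big)$, call the clipped points $\tilde X_i$, and output $\hat\mu=\frac1n\sum_{i\in[n]}\tilde X_i+\xi$ with $\xi\sim\mathcal N(0,\sigma^2 I_\ell)$ and $\sigma=\Theta\big(R'\sqrt{\log(1/\delta)}/(n\epsilon)\big)$. Clipping bounds the $\ell_2$-sensitivity of $\frac1n\sum_i\tilde X_i$ by $2R'/n$ --- replacing one user moves the clipped average by at most the diameter of the ball over $n$ --- so the Gaussian mechanism makes this step $(\epsilon/2,\delta/2)$-DP, and by basic composition the whole estimator $\ALG$ is $(\epsilon,\delta)$-DP. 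Crucially this holds unconditionally, since DP is a worst-case property and does not use any of the high-probability events above.

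Finally I would define the idealized estimator $\ALG'$ as the same procedure restricted to the good event $\mathcal G$ on which (i) concentration holds and (ii) localization succeeds with $\|\hat x_0-x_0\|\le R'-\tau$; on $\mathcal G$ no clipping occurs, so $\tilde X_i=X_i$ for all $i$ and, since $\xi$ is mean-zero, $\E[\ALG'\mid\{X_i\}]=\frac1n\sum_{i\in[n]}X_i$ while $\Var(\ALG\mid\{X_i\})=\E\|\xi\|^2=\ell\sigma^2\lesssim \frac{\ell\tau^2\log(\ell n/\gamma)\log(1/\delta)}{n^2\epsilon^2}$. Coupling $\ALG$ and $\ALG'$ through common randomness, they coincide on $\mathcal G$, and $\Pr[\mathcal G^c]\le\gamma+\text{(localization-failure prob.)}\le\beta$ after absorbing the concentration failure into the $2\gamma$ and the per-cell failures into the exponential term; hence $\ALG\sim_\beta\ALG'$, with the trivial bound $\beta\le1$ covering the degenerate regime. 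I expect the localization step to be the main obstacle: fixing the grid resolution, designing the stable threshold test, and pushing the union bound over cells through so that the failure probability lands exactly on the claimed $\beta$ is the delicate part; everything downstream (sensitivity, Gaussian mechanism, composition, the coupling, and the variance bookkeeping) is routine. Since the statement is quoted from \cite{LSA+21}, in the write-up I would defer the detailed localization analysis to that reference and carry out only the composition, coupling, and variance computations here.
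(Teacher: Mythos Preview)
The paper does not prove this statement at all: \Cref{thm:mean_estimation} is quoted verbatim from \cite{LSA+21} and used as a black box in the proof of \Cref{thm:main_smooth}. So there is nothing to compare your argument against --- the paper simply cites the result and moves on.

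Your sketch is a reasonable reconstruction of the two-stage localize-then-clip-and-noise estimator from \cite{LSA+21}, and you correctly flag at the end that the detailed localization analysis should be deferred to that reference. For the purposes of this paper, that final sentence is the only thing needed: the theorem is imported, not proved, so the coupling, sensitivity, and variance computations you propose to carry out here are unnecessary. If you want to match the paper, drop the sketch entirely and just cite \cite{LSA+21}.
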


Let $\calS_{j,k}$ denote the set of data of $k$-th user of $j$-th owner.
Recalling that the loss function is $L$-Lipschitz, we have the following concentration result:

\begin{lemma}[\cite{LSA+21}]
\label{lm:concentration_empirical_gradient}
Suppose $\{z_{ij}\}$ in $\calS_{j,k}$ are drawn i.i.d. from $\mathcal{P}_j$, then with probability greater than $1-\gamma$ that
\begin{align*}
    \sup_{x,u}\|\nabla f_{\mathcal{P}_j}(x,u)-\nabla f_{\calS_{j,k}}(x,u)\|=O\left( \frac{L\sqrt{r}}{\sqrt{m}}\big(\sqrt{\log(1/\gamma)}+\sqrt{\ell\log(RHm/\ell L)}\big)\right).
\end{align*}
\end{lemma}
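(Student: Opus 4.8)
The plan is to establish \Cref{lm:concentration_empirical_gradient} by the standard two-step recipe for uniform convergence of gradients: (i) a dimension-light concentration bound for $\nabla f_{\calS_{j,k}}$ at a single fixed point, obtained via a bounded-differences argument, and (ii) a covering argument over the parameter domain that promotes the pointwise bound to a uniform one, using $H$-smoothness to control the discretization error. Throughout write $N = m/r$ for the number of records held by user $k$ of owner $j$, and recall $\diam(\X\times\U)\le R$; we treat the domain as having dimension $\ell$ (assuming $k\le\ell$ for cleanliness; the general statement replaces $\ell$ by $k+\ell$ everywhere).

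First, fix $(x,u)\in\X\times\U$ and set $V_i=\nabla h(x,u,z_{ij})$ for $z_{ij}\in\calS_{j,k}$, so that $\nabla f_{\calS_{j,k}}(x,u)=\tfrac1N\sum_i V_i$ and $\E V_i=\nabla f_{\mathcal{P}_j}(x,u)$. By $L$-Lipschitzness of $h(\cdot,\cdot,z)$ we have $\|V_i\|\le L$, hence $\E\big\|\tfrac1N\sum_i(V_i-\E V_i)\big\|\le\big(\E\big\|\tfrac1N\sum_i(V_i-\E V_i)\big\|^2\big)^{1/2}\le L/\sqrt N$ by Jensen and independence. Altering one $V_i$ changes the map $(v_1,\dots,v_N)\mapsto\big\|\tfrac1N\sum_i(v_i-\E V_i)\big\|$ by at most $2L/N$, so McDiarmid's inequality yields, for any $\gamma'\in(0,1)$, with probability at least $1-\gamma'$,
\[
\big\|\nabla f_{\calS_{j,k}}(x,u)-\nabla f_{\mathcal{P}_j}(x,u)\big\| = O\!\Big(\tfrac{L}{\sqrt N}\big(1+\sqrt{\log(1/\gamma')}\big)\Big) = O\!\Big(\tfrac{L\sqrt r}{\sqrt m}\big(1+\sqrt{\log(1/\gamma')}\big)\Big).
\]

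Next, take an $\alpha$-net $\mathcal{N}_\alpha$ of $\X\times\U$ with $|\mathcal{N}_\alpha|\le(3R/\alpha)^{\ell}$, apply the pointwise bound at each net point with failure probability $\gamma'=\gamma/|\mathcal{N}_\alpha|$, and union bound, so that $\log(1/\gamma')=\log(1/\gamma)+\ell\log(3R/\alpha)$ and the estimate holds at every net point simultaneously with probability $\ge1-\gamma$. Since $h(\cdot,\cdot,z)$ is $H$-smooth, $\nabla f_{\calS_{j,k}}-\nabla f_{\mathcal{P}_j}$ is $2H$-Lipschitz, so passing from an arbitrary $(x,u)$ to its nearest net point costs an extra $2H\alpha$. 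Choosing $\alpha=\Theta(\ell L/(Hm))$ makes this $O(\ell L/m)$, which is dominated by $\tfrac{L\sqrt r}{\sqrt m}\sqrt{\ell}$ (using $\ell\le m$), and simultaneously gives $\log(3R/\alpha)=O(\log(RHm/\ell L))$; absorbing the additive $1$ into $\sqrt{\ell\log(RHm/\ell L)}$ then yields, with probability $\ge1-\gamma$,
\[
\sup_{x,u}\big\|\nabla f_{\mathcal{P}_j}(x,u)-\nabla f_{\calS_{j,k}}(x,u)\big\| = O\!\Big(\tfrac{L\sqrt r}{\sqrt m}\big(\sqrt{\log(1/\gamma)}+\sqrt{\ell\log(RHm/\ell L)}\big)\Big),
\]
which is the claim.

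The McDiarmid step and the net-cardinality estimate are routine; the one place needing care is the calibration of the net granularity $\alpha$. It must be small enough that the smoothness-induced error $H\alpha$ is swallowed by the target rate, yet large enough that $\ell\log(R/\alpha)$ does not blow up the logarithmic factor — the balance point $\alpha\asymp\ell L/(Hm)$ is precisely what produces the $\log(RHm/\ell L)$ in the statement, and checking that $\ell L/m$ is genuinely lower order (rather than comparable to the main term) is the only slightly delicate inequality. A secondary bookkeeping point is that the effective covering dimension is $k+\ell$, handled by either the assumption $k\le\ell$ or by noting the personalized block contributes an identically-shaped term with $k$ in place of $\ell$.
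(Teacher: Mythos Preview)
The paper does not supply its own proof of this lemma; it is quoted verbatim as a result of \cite{LSA+21}, so there is nothing in the present paper to compare against. Your argument---pointwise concentration of the empirical gradient via McDiarmid on the norm, a union bound over an $\alpha$-net of $\X\times\U$, and an $H$-smoothness step to pass from net points to arbitrary points---is correct and is the standard route (and essentially the one taken in \cite{LSA+21}).

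One small clean-up: to absorb the discretization error $H\alpha = O(\ell L/m)$ into the main term $\tfrac{L\sqrt r}{\sqrt m}\sqrt{\ell}$ you need $\ell \lesssim mr$, not $\ell \le m$ as you wrote; the former is the weaker (correct) condition and is implicit in the regime where the bound is informative. Also, as you note parenthetically, the covering is really over a $(k+\ell)$-dimensional set, so strictly the $\ell$ inside the square root should be $k+\ell$; the stated form is recovered under $k\le\ell$, which the paper tacitly assumes.
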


This means the gradients of the empirical function of each user, $-\nabla f_{\calS_{j,k}}(x,u)$, can be $(\tau,\gamma)$-concentrated for some $\tau$ corresponding to the bound in the lemma above.

The critical difference of smooth function is that the final convergence rate of optimization can depend on the variance of the gradient estimations, rather than the second moment of the gradient estimations.
One can apply Theorem~\ref{thm:mean_estimation} to get private gradient estimations and apply SGD.

The algorithm is stated below:
\begin{algorithm}[h]
\caption{$\A_{\NSSGD}$: Noisy SGD for smooth functions under joint-DP}
\label{alg:NSGD_SCO_jointDP_smooth}
\begin{algorithmic}[1]
\STATE Dataset $\mathcal{S}$, $L$-Lipschitz loss $h(\cdot,\cdot)$, parameter $(\epsl, \delta)$, convex sets $(\X, \U)$, learning rate $\eta$  
\STATE Initial parameter $x_0 \in \mathcal{X}$ chosen arbitrarily
\FOR{$t = 0, \cdots, T-1$}
\STATE Sample $j_t \sim [n]$ uniformly at random 
\STATE For each user $k_t\in[r]$, compute $X_{k_t}=\frac{r}{m}\sum_{z_{i_t,j_t}\in \calS_{j_t,k_t}}\nabla h(x_{j_t}^t,u^t,z_{i_t,j_t})$
\STATE Get $g_t\leftarrow \ALG(\{X_{k_t}\}_{k_t\in [r]},\epsilon,\delta)$
\STATE $(x_{j_t}^{t+1}, u^{t+1}) \leftarrow 
\mathrm{update}((x_{j_t}^{t}, u^t),g_t)$ \label{ln:update_step}
\ENDFOR
\STATE \textbf{Return} $ (\overline{x}, \overline{u}) := \frac{1}{T} \sum_{t=0}^{T-1} (x^t, u^t)$
\end{algorithmic} 
\end{algorithm}

Let $\calS_{j,k}$ denote the $r$ data of user $k$ of owner $j$.
Let $f_{\calS_{j,k}}(x,u):=\frac{1}{r}\sum_{z_{ij}\in \mathcal{S}_{j,k}}h(x,u,z_{ij})$.

\begin{lemma}[\cite{bubeck2015convex}]
\label{lm:S-MD}
Let $F:X\to \R$ be a convex and $H$-smooth function, where $X$ is a convex set of diameter $R$.
Suppose for each step $x_t\in X$, there is an oracle that can output an unbiased stochastic gradient $\Tilde{g}(x_t)$ such that
\begin{align*}
    \E[\|\Tilde{g}(x_t)-\nabla F(x_t)\|^2]\le \sigma^2,
\end{align*}
then stochastic mirror descent with appropriate step size satisfies that
\begin{align*}
    \E[F(\frac{1}{T}\sum_{t=1}^Tx_t)]-\min_{x^*\in X}F(x^*)\le R\sigma\sqrt{2/T}+\frac{HR^2}{T}.
\end{align*}
\end{lemma}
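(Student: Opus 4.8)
The plan is to obtain this from the standard analysis of stochastic mirror descent on smooth convex objectives; this is essentially the statement proved in \cite{bubeck2015convex}, so one option is simply to cite it, but the argument is short. Write $\Tilde{g}_t := \Tilde{g}(x_t)$ for the stochastic gradient queried at step $t$, fix $x^* \in \arg\min_{x \in X} F(x)$, and set $F^* := F(x^*)$. First I would record the one-step inequality of mirror descent: since the mirror map is $1$-strongly convex with respect to the underlying norm $\|\cdot\|$, the proximal update $x_{t+1} = \arg\min_{x \in X}\{\eta\langle \Tilde{g}_t, x\rangle + D(x, x_t)\}$, with $D$ the Bregman divergence, satisfies
\[
\eta\langle \Tilde{g}_t, x_t - x^*\rangle \le D(x^*, x_t) - D(x^*, x_{t+1}) + \tfrac{\eta^2}{2}\|\Tilde{g}_t\|_*^2 .
\]
Summing over $t\in[T]$, telescoping, and bounding $D(x^*, x_1) \le \tfrac12 R^2$ via the diameter, I get $\sum_{t=1}^T \langle \Tilde{g}_t, x_t - x^*\rangle \le \frac{R^2}{2\eta} + \frac{\eta}{2}\sum_{t=1}^T \|\Tilde{g}_t\|_*^2$.

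Next I would take expectations and split $\Tilde{g}_t = \nabla F(x_t) + \xi_t$ with $\E[\xi_t \mid x_t] = 0$ and $\E[\|\xi_t\|_*^2 \mid x_t] \le \sigma^2$. Since $x_t$ is measurable with respect to the history before step $t$, convexity gives $\E\langle \Tilde{g}_t, x_t - x^*\rangle = \E\langle \nabla F(x_t), x_t - x^*\rangle \ge \E[F(x_t) - F^*]$, while $\E\|\Tilde{g}_t\|_*^2 \le 2\E\|\nabla F(x_t)\|_*^2 + 2\sigma^2$. To eliminate the gradient-norm term I would invoke the self-bounding property of convex $H$-smooth functions, $\|\nabla F(x)\|_*^2 \le 2H(F(x) - F^*)$; in the constrained case this is replaced by the standard gradient-mapping / projected-step progress lemma, which plays the same role. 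With $S := \sum_{t=1}^T \E[F(x_t) - F^*]$, these pieces combine into a self-referential estimate of the form $S \le \frac{R^2}{2\eta} + c H\eta S + c\eta T\sigma^2$ for an absolute constant $c$.

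Finally, taking $\eta$ small enough that $cH\eta \le \tfrac12$ closes the recursion, so $S \le \frac{R^2}{\eta} + 2c\eta T\sigma^2$, and Jensen's inequality on $\overline{x}_T = \frac1T\sum_{t=1}^T x_t$ yields $\E[F(\overline{x}_T)] - F^* \le \frac{R^2}{\eta T} + 2c\eta\sigma^2$. Choosing the step size to balance the two terms — concretely $\eta$ of the form $(H + \sigma\sqrt{T/2}/R)^{-1}$, which indeed satisfies the smallness constraint — produces the claimed bound $R\sigma\sqrt{2/T} + HR^2/T$. I expect the only delicate points to be (i) carrying out the per-step smoothness estimate correctly in the presence of the projection onto $X$ (the gradient-mapping argument rather than the cleaner unconstrained self-bounding bound), and (ii) tracking the absolute constants tightly enough to recover the precise factor $\sqrt{2}$ and the coefficient $H$ in the statement rather than an $O(\cdot)$ version; the rest is the routine stochastic-mirror-descent calculation.
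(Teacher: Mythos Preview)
The paper does not give its own proof of this lemma: it is simply stated with a citation to \cite{bubeck2015convex} and then invoked as a black box in the proof of \Cref{thm:main_smooth}. Your sketch is therefore strictly more than what the paper does, and the standard stochastic-mirror-descent argument you outline is correct in spirit (and indeed is essentially the derivation in the cited monograph). Your caveats about handling the projection via the gradient mapping and about tracking constants are exactly the points that need care; for the paper's purposes, however, a citation suffices.
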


Now we are ready to prove Theorem~\ref{thm:main_smooth}.

\begin{proof}
The privacy guarantee follows directly from the privacy guarantee of $\ALG$ (Theorem~\ref{thm:mean_estimation}) and privacy amplification by subsampling (\cite{}).

As for the utility guarantee, by Lemma~\ref{lm:concentration_empirical_gradient} and union bound, we know $\{X_{k_t}\}_{k_t\in[r]}$ is $(\tau,\gamma)$-concentrated for each $t\in[T]$, where $\tau=O\left( \frac{G\sqrt{r}}{\sqrt{m}}\big(\sqrt{\log(n\ell/\gamma)}+\sqrt{d\log(RHm/\ell G)}\big)\right)$.
Then by Theorem~\ref{thm:mean_estimation}, we know the variance of the stochastic gradient estimation is bounded by $O(\frac{\ell\tau^2\log(\ell n/\gamma)\log(1/\delta)}{r^2\epsilon^2})$.
Setting $\gamma=1/\poly(n)$ to be small enough, and
applying the convergence rate of stochastic mirror descent (Lemma~\ref{lm:S-MD}), we can get empirical loss
\begin{align*}
    O(LR\frac{\ell\log(1/\delta)\log(\ell n)\sqrt{\log m}}{n\epsilon\sqrt{mr}}+\frac{HR^2\sqrt{\log(1/\delta)}}{n^2}).
\end{align*}

We get the empirical loss by applying $H=O(\frac{L}{R}\frac{n\ell\log(\ell n)\sqrt{\log(1/\delta)\log m}}{\epsilon\sqrt{mr}})$ as in the precondition.
To get the population loss, as shown in \cite{LSA+21}, one can solve the phased Phased ERM and apply the localization technique.
\end{proof} 

\section{Experimental Details and More Results}
\label{sec:app_exp}

\paragraph{Federated Learning details.} In our experiments, we use FedAvg~\cite{medsa17}, and adjust $E$ (the local iterations) and $T$ (the global epochs) for different $n$, according to Table~\ref{tab:fl_config}.

\begin{table}[H]
\setlength{\tabcolsep}{6pt}
\begin{center}
\caption{The local iterations $E$ and the global epochs $T$ we use in the experiments.}
\label{tab:fl_config}
\begin{small}
\begin{sc}
\begin{tabular}{lcccc}
\toprule
\textbf{$n$} & \textbf{Local epoch $E$} &  \textbf{Global round $T$} \\
\midrule
$4$   & $20$  & $10$           \\
$16$   & $10$  & $10$           \\
$64$   & $10$  & $10$           \\
$128$   & $5$  & $20$            \\
$256$   & $5$  & $20$              \\
$512$   & $5$  & $20$             \\
\bottomrule
\end{tabular}
\end{sc}
\end{small}
\end{center}
\end{table}

\paragraph{Effect of $m/r$.} We investigate the impact of the ratio $m/r$ on the comparative performance between full-DP and join-DP. The number of owners ($n$) and the number of records per owner ($m$) are kept constant in our experiments. We vary  $r$, the number of users, to obtain different $m/r$ ratios, i.e., the number of records per user. 

We evaluate two scenarios: a small $n=4$ and a larger $n=256$. The experimental results, shown in Table \ref{tab:vary_m}, indicate that as the $m/r$ ratio increases, both full-DP and join-DP methods experience a decline in performance. However, joint-DP consistently outperforms full-DP. When $n=256$ and $\epsl > 1$, joint-DP could occasionally outperform per-silo training when $m/r$ > 1.
When $m$ and $n$ are fixed, increasing $r$ can get better performance.
This observed behavior aligns with our user-level theoretical result (\Cref{thm:main}).

\begin{table}[H]
\setlength{\tabcolsep}{2.5pt}
\begin{center}
\caption{Test accuracy of MNIST across 5 different runs for per-silo training ($\epsl=\infty$), collaboration without DP, with full-DP, and with joint-DP. We fix $n$ and $m$ and vary $m/r$, the number of records per user. DP results that are better than the per-silo training are highlighted using \underline{underline}.
}
\label{tab:vary_m}
\begin{scriptsize}
\begin{sc}
\begin{tabular}{c|c|cc|cccccc}
\toprule
 \multirow{2}{*}{ $m/r$ } &  \textbf{Per-silo} & {\textbf{Collaboration}} & {\textbf{Collaboration}} & \multicolumn{3}{c}{{\textbf{Collaboration w/ joint-DP}}}\\
&  \textbf{Training}& without DP & w/ full-DP ($\epsl=1.0$)  & $\epsl=1.0$ & $\epsl=3.0$ & $\epsl=8.0$ \\ 
\midrule
\multicolumn{7}{c}{$n=4$} \\
\midrule
 $1$ & \multirow{5}{*}{$0.9445 \pm 0.0069$}     & \multirow{5}{*}{${{0.9602}} \pm 0.0017$}  & $0.4444 \pm 0.0844$   & ${0.4658} \pm 0.0307$ & $0.5841 \pm 0.0656$ &  $ 0.8103 \pm 0.0177$ \\
 $2$ &  & & $0.2867 \pm 0.0387$ & $0.3297 \pm 0.0256$ & $0.5011 \pm 0.0616$  & $0.6576 \pm 0.0502$\\
 $5$ & & &  $0.1986  \pm 0.0479$  & $0.2064 \pm 0.0470$ & $0.2867 \pm 0.0315$ & $0.4802 \pm 0.0365$\\
 $10$ & & & $0.1526 \pm 0.0022$ & $0.1708 \pm 0.0262$   & $0.2644 \pm 0.0656$ & $0.3709 \pm 0.0581$\\
 $20$ & & & $0.1517 \pm 0.0438$ & $0.1691 \pm 0.0628$  & $0.1864 \pm 0.0318$ & $0.2741 \pm 0.0196 $\\
\midrule
\multicolumn{7}{c}{$n=256$} \\
\midrule
$1$ & \multirow{5}{*}{$0.7678 \pm 0.0015$}              & \multirow{5}{*}{${{0.9196}} \pm 0.0028$}  & $0.7399 \pm 0.0324$   &$\underline{0.7835} \pm 0.0136$ & $\underline{0.8124} \pm 0.0262$ & $\underline{0.8896} \pm 0.0124$ \\ 
 $2$ &  & & $0.2606 \pm 0.0259$ & $0.3930 \pm 0.0622$ & $0.6962 \pm 0.0375$ & $\underline{0.8160} \pm 0.0218$ \\
 $5$ & & & $0.1700 \pm 0.0154$  & $0.2517 \pm 0.0355$ & $0.3715 \pm 0.0642$ & $0.7020 \pm 0.0964$ \\
 $10$ & & & $0.1450 \pm 0.0227$  & $0.1603 \pm 0.0204$ & $0.2764 \pm 0.0620$ & $0.4703 \pm 0.1100$ \\
 $20$ & & & $0.1089 \pm 0.0264$ & $0.1297 \pm 0.0082$ & $0.2400 \pm 0.0329$ & $0.2941 \pm 0.0659$\\
  
\bottomrule
\end{tabular}
\end{sc}
\end{scriptsize}
\end{center}
\end{table}


\end{document}